\newtheorem{theorem}{Theorem}[section]
\icmltitlerunning{Bidirectional Model-based Policy Optimization}
\begin{document}

\twocolumn[
\icmltitle{Bidirectional Model-based Policy Optimization}



\icmlsetsymbol{equal}{*}

\begin{icmlauthorlist}
\icmlauthor{Hang Lai}{sjtu}
\icmlauthor{Jian Shen}{sjtu}
\icmlauthor{Weinan Zhang}{sjtu}
\icmlauthor{Yong Yu}{sjtu}
\end{icmlauthorlist}

\icmlaffiliation{sjtu}{Shanghai Jiao Tong University, Shanghai, China}


\icmlcorrespondingauthor{Hang Lai}{laihang@apex.sjtu.edu.cn}
\icmlcorrespondingauthor{Weinan Zhang}{wnzhang@sjtu.edu.cn}

\icmlkeywords{Machine Learning, ICML}

\vskip 0.3in
]



\printAffiliationsAndNotice{}  

\begin{abstract}

Model-based reinforcement learning approaches leverage a forward dynamics model to support planning and decision making, which, however, may fail catastrophically if the model is inaccurate. Although there are several existing methods dedicated to combating the model error, the potential of the single forward model is still limited. In this paper, we propose to additionally construct a backward dynamics model to reduce the reliance on accuracy in forward model predictions. We develop a novel method, called Bidirectional Model-based Policy Optimization (BMPO) to utilize both the forward model and backward model to generate short branched rollouts for policy optimization. Furthermore, we theoretically derive a tighter bound of return discrepancy, which shows the superiority of BMPO against the one using merely the forward model. Extensive experiments demonstrate that BMPO outperforms state-of-the-art model-based methods in terms of sample efficiency and asymptotic performance.

\end{abstract}

\section{Introduction}
Reinforcement learning (RL) methods are commonly divided into two categories: (1) model-free RL (MFRL) which directly learns a policy or value function from observation data, and (2) model-based RL (MBRL) which builds a predictive model of the environment and generates samples from it to derive a policy or a controller. While MFRL algorithms have achieved remarkable success in different ranges of areas \cite{mnih2015human, lillicrap2015continuous, schulman2017proximal}, they need a large number of samples, which limit their applications mostly to simulators. Model-based methods, in contrast, have shown great potential in reducing the sample complexity \cite{deisenroth2013survey}.
However, the asymptotic performance of MBRL methods often lags behind their model-free counterparts due to model error, which is especially severe for multi-step rollout because of the compounding error \cite{asadi2018lipschitz}.

Several previous works have been proposed to alleviate compounding model error in different ways. For example,  \citet{whitney2018understanding} and \citet{asadi2019combating} introduced multi-step models which directly predict the consequence of executing a sequence of actions; \citet{mishra2017prediction} tried to divide trajectories into temporal segments and make predictions over segments instead of one timestep; \citet{talvitie2017self}, \citet{kaiser2019model} and \citet{Freeman2019LearningTP} proposed to train the dynamics model on its own outputs,
hoping the model is capable of giving accurate predictions in the region of its outputs. 
\citet{wu2019model} attributed long horizon rollouts mismatching to the one-step supervised learning and proposed to learn the model via model imitation. Besides the above innovations in model learning,
\citet{nguyenimproving} and \citet{ xiao2019learning} developed methods of using adaptive rollout horizon according to the estimated compounding error, and \citet{janner2019trust} proposed to use truncated short rollouts branched from real states. 
However, even with the compounding model error mitigated, it is almost impossible to obtain perfect multi-step predictions in practice, and thus the potential of the single forward model is still limited. 

In this paper, we investigate how to maximally take advantage of the underlying dynamics of the environment to reduce the reliance on the accuracy of the forward model. When making decisions, human beings can not only predict future consequences of current behaviors forward but also imagine possible traces leading to a target goal backward. Based on this consideration, inspired by \citet{edwards2018forward} and \citet{goyal2018recall}, we propose to learn a backward model in addition to the forward dynamics model to generate simulated rollouts backwards. To be specific, given a state and a previous action, the backward model predicts the preceding state, which means it can generate trajectories that terminate at one certain state.\

Now, intuitively, if we utilize both the forward model and the backward model to sample bidirectionally from some encountering states for policy optimization, the model error will compound separately in different directions. As Figure~\ref{fig:rollout comparison} shows, when we generate trajectories with the same length using a forward model and bidirectional models respectively, the compounding error of the latter will be less than that of the former. More theoretical analysis is 
given in Section \ref{sec:theory}.

\begin{figure}[tb]
    \centering
    \includegraphics[width=0.48\textwidth]{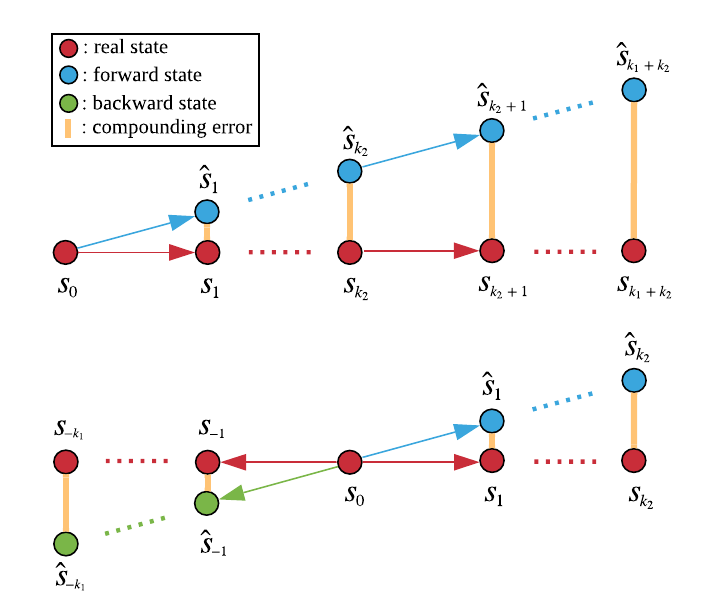}
    \vspace{-20pt}
    \caption{Comparison of compounding error of trajectories generated using only a forward model (top) and bidirectional models (bottom). States in real trajectories are shown in red, and states predicted by the forward (backward) model are shown in blue (green). The orange lines between real states and simulated states roughly represent the error, and when the rollout length increases, the error will compound. For bidirectional models, model error compounds $k_1$ steps backward and $k_2$ steps forward while for the forward model, model error compounds $k_1+k_2$ steps.}
    \label{fig:rollout comparison}
\end{figure}

With this insight, we combine bidirectional models with recent MBPO method \cite{janner2019trust} and propose a practical MBRL algorithm called Bidirectional Model-based Policy Optimization (BMPO).
Besides, we develop a novel state sampling strategy to sample states for model rollouts and incorporate model predictive control (MPC) \cite{camacho2013model} into our algorithm for further performance improvement. Furthermore, we theoretically prove that BMPO has a tighter bound of return discrepancy compared against MBPO, which verifies the advantage of bidirectional models in theory. We evaluate our BMPO and previous state-of-the-art algorithms \cite{haarnoja2018soft,janner2019trust} on a range of continuous control benchmark tasks. Experiments demonstrate that BMPO achieves higher sample efficiency and better asymptotic performance compared with prior model-based methods which only use forward models.

\section{Related Work}

Model-based reinforcement learning methods are expected to reduce sample complexity by learning a model as a simulator of the environment \cite{sutton2018reinforcement}. However, model error tends to cripple the performance of model-based approaches, which is also known as model-bias \cite{deisenroth2011pilco}. As is discovered in previous works \cite{venkatraman2015improving, talvitie2017self, asadi2018lipschitz}, even small model error can severely degrade multi-step rollouts since the model error will compound, and the predicted states will move out of the region where the model has high accuracy after a few steps \cite{asadi2018towards}.

To mitigate the compounding error problem, multi-step models \cite{whitney2018understanding,  asadi2019combating} were proposed to predict the outcome of executing a sequence of actions directly. Segments-based models have also been considered to make stable and accurate predictions over temporal segments \cite{mishra2017prediction}. Alternatively, a model may also be trained on its own outputs, hoping that the model can perform reasonable prediction in the region of its outputs \cite{talvitie2017self, kaiser2019model, Freeman2019LearningTP}. Besides, model imitation tried to learn the model by matching the multi-step rollouts distributions via WGAN \cite{wu2019model}. Furthermore, adaptive rollout horizon techniques were investigated to stop rolling out based on compound model error estimates \cite{nguyenimproving, xiao2019learning}. The MBPO algorithm \cite{janner2019trust} avoided the compounding error by generating short branched rollouts from real states.  
This paper is mainly based on MBPO framework backbone and allows for extended rollouts with less compounding error by generating rollouts bidirectionally.

There are many model architecture choices, such as linear models \cite{parr2008analysis, sutton2012dyna, levine2013guided, levine2014learning, kumar2016optimal}, nonparametric Gaussian processes \cite{kuss2004gaussian, ko2007gaussian, deisenroth2011pilco}, and neural networks \cite{draeger1995model, gal2016improving, nagabandi2018neural}. Model ensembles have shown to be effective in preventing a policy or a controller from exploiting the inaccuracies of any single model \cite{rajeswaran2016epopt,chua2018deep, kurutach2018model, janner2019trust}. In this paper, we adopt the ensemble of probabilistic networks for both forward and backward models.

Model predictive control (MPC) \cite{camacho2013model} is considered as an efficient and robust way in model-based planning, which utilizes the dynamic model to look forward several steps and optimizes the actions sequence over a finite horizon. \citet{nagabandi2018neural} combined deterministic neural networks model with a simple MPC of random shooting and acquired stable and plausible locomotion gaits in high-dimensional tasks. \citet{chua2018deep} improved this method by capturing two kinds of uncertainty in modeling and replacing random shooting with the cross-entropy method (CEM). \citet{wang2019exploring} further expanded MPC algorithms by applying policy networks to generate action sequence proposals. In this paper, we also incorporate MPC into our algorithm to refine the actions sampled from policy when interacting with the environment. 

Theoretical analysis for tabular- and linear-setting model-based methods has been conducted in prior works \cite{szita2010model,dean2017sample}. As for non-linear settings, \citet{sun2018dual} provided a convergence analysis for the model-based optimal control framework. \citet{luo2018algorithmic} built a lower bound of the expected reward while enforced a trust region on the policy. \citet{janner2019trust}, instead, derived a bound of discrepancy between returns in real environment and those under the branched rollout scheme of MBPO in terms of rollout length, model error, and policy shift divergence. In this paper, we also provide a theoretical analysis of leveraging bidirectional models in MBPO and obtain a tighter bound.

To facilitate research in MBRL, \citet{langlois2019benchmarking} gathered a wide collection of MBRL algorithms and benchmarked them on a series of environments specially designed for MBRL. Beyond comparing different methods, they also raised several crucial challenges for future MBRL research.

This work is closely related to \citet{goyal2018recall}, which built a backtracking model of the environment and used it to generate traces leading to high value states. These traces are then used for imitation learning to improve the model-free learner. Our approach differs from theirs in the following two aspects: (1) their method is more like model-free RL as they mainly train the policy on real observation data and use the model generated traces to fine-tuning the policy, while our method is purely model-based RL and optimize the policy with model-generated data. (2) they only build a backward model of the environment, whereas we build a forward model and a backward model at the same time.

\section{Preliminaries}
\subsection{Reinforcement Learning}
A discrete-time Markov decision process (MDP) with infinite horizon is defined by the tuple ($\mathcal{S}, \mathcal{A},M^*, r, \gamma, \rho_{0}$). Here, $\mathcal{S} \in \mathbb{R}^{d_{s}}$ and $\mathcal{A} \in \mathbb{R}^{d_{a}}$ denote the state and action spaces, respectively.  $\gamma \in [0,1)$ is the discount factor. $M^*(s'|s,a)$ denotes the transition distribution function, and $r(s,a)$ denotes the reward function. $\rho_{0}$ represents the initial state distribution.
 Let $\eta$ denote the expected return, \textit{i.e.}, the sum of the discounted rewards along a trajectory $\tau:=\left(s_{0}, a_{0}, \dots, s_{t}, a_{t} \dots \right)$. The goal of reinforcement learning is to find the optimal policy $\pi^*$ that maximizes the expected return:
\begin{equation}
\label{eq: rl-obj}
    \pi^* = \mathop{\arg \max}_\pi \eta[\pi]=\mathop{\arg \max}_\pi \mathbb{E}_\pi \Big[\sum_{t=0}^\infty \gamma^t r(s_t,a_t) \Big].
\end{equation}
where $s_{0} \sim \rho_{0}$, \ $a_{t} \sim \pi(a_{t}|s_t)$, \ $s_{t+1} \sim M^*(s_{t+1}|s_t,a_t)$.

\subsection{Dyna-style Algorithm}

Dyna-style algorithm \cite{sutton1991dyna, langlois2019benchmarking} is one kind of MBRL method that uses a model for policy optimization. To be specific, in Dyna, a forward dynamic model $p_{\theta}\left(s^{\prime} | s, a\right)$ is learned from agent interactions with the environment by executing the policy $\pi_{\phi}\left(a | s\right)$. The policy $\pi_{\phi}$ is then optimized using model-free algorithms on real data and model-generated data. To perform the Dyna-style algorithm using bidirectional models, we further learn a backward model $q_{\theta^{\prime}}\left(s | s^{\prime}, a\right)$ and a backward policy $\tilde{\pi}_{\phi^{\prime}}\left(a | s^{\prime}\right)$. As such, forward rollouts and backward rollouts can be generated simultaneously for policy optimization.  

\begin{figure*}[htb]
    \centering
    \vspace{-10pt}
    \includegraphics[width=1.0\textwidth]{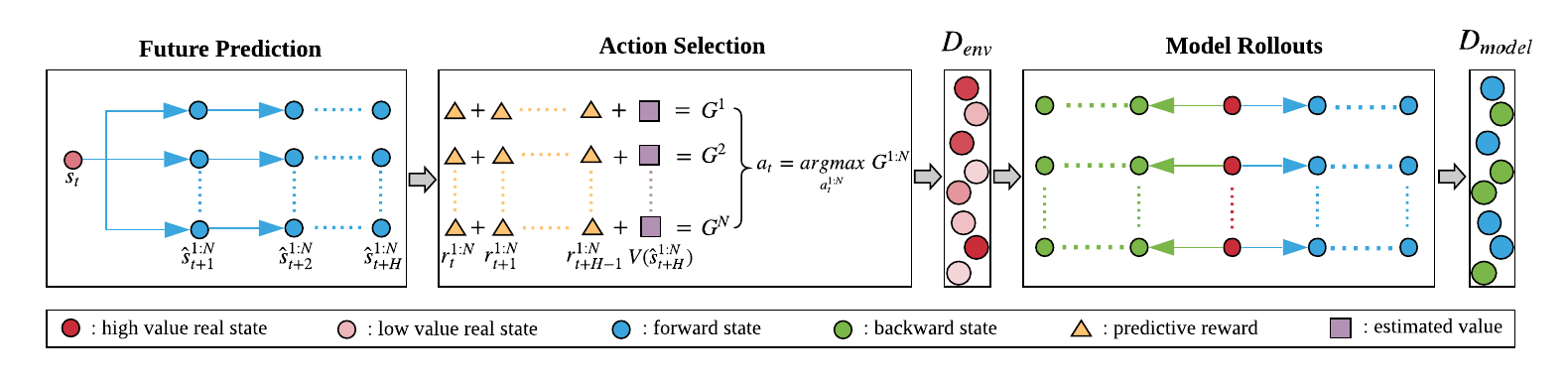}
    \vspace{-25pt}
    \caption{Overview of the BMPO algorithm. When interacting with the environment, the agent uses the model to predict the future states over a finite horizon $H$ and then take the first action of the sequence with the highest simulated estimated return. The transitions are stored in $\mathcal{D}_{\mathrm{env}}$, where the state value is represented by the shade of color (value increases from light red to dark red). High value states are then sampled from $\mathcal{D}_{\mathrm{env}}$ to perform bidirectional model rollouts, which are stored in $\mathcal{D}_{\mathrm{model}}$ for policy optimization afterward.}
    \vspace{-7pt}
    \label{fig:alg_framework}
\end{figure*}

\section{BMPO Framework}

\begin{algorithm2e}[t]
\caption{Bidirectional Model-based Policy Optimization (BMPO)}
\LinesNumbered
\DontPrintSemicolon
\label{alg:BMPO}
Initialize policy $\pi_{\phi}$, backward policy $\tilde{\pi}_{\phi^{\prime}}$, forward model $p_{\theta}$, backward model $q_{\theta^{\prime}}$, environment replay buffer $\mathcal{D}_{\mathrm{env}}$, model replay buffer $\mathcal{D}_{\mathrm{model}}$\;
\For{$N$ epochs}{
Train model $p_{\theta}$ and $q_{\theta^{\prime}}$ using $\mathcal{D}_{\mathrm{env}}$ via applying gradient descent on Equation \ref{equation:forward model loss} and \ref{equation:backward model loss}\;
\For{$E$ steps}{
Take an action in environment according to MPC; add to $\mathcal{D}_{\mathrm{env}}$\;
\For{M model rollouts}{
Sample state $s_{t}$ from $\mathcal{D}_{\mathrm{env}}$ with probability computed via Equation \ref{eq: state choose possibility}\;
Perform $k_1$ steps backward rollouts and $k_2$ steps forward rollouts starting from $s_t$; add to $\mathcal{D}_{\mathrm{model}}$\;
}
\For{$G$ gradient updates}{
Update policy parameters on model data: $\phi \leftarrow \phi-\lambda_{\pi} \hat{\nabla}_{\phi} J_{\pi}\left(\phi, \mathcal{D}_{\text {model }}\right)$\;
}
\For{$G^{\prime}$ gradient updates}{
Update backward policy on $\mathcal{D}_{\mathrm{env}}$ by optimizing Equation \ref{equation:backward policy mle loss} or \ref{equation:backward policy gan loss}
}
}
}


\end{algorithm2e}

In this section, we introduce how BMPO leverages bidirectional models to generate more plentiful simulated data for policy optimization in detail. Although bidirectional models can be incorporated into almost any Dyna-style model-based algorithms \cite{sutton1991dyna}, we choose the Model-based Policy Optimization (MBPO) \cite{janner2019trust} algorithm as the framework backbone since it is the state-of-the-art MBRL method and is sufficiently general. The overall algorithm is demonstrated in Algorithm \ref{alg:BMPO}
, and an overview of the algorithm architecture is shown in Figure \ref{fig:alg_framework}.

\subsection{Bidirectional Models Learning in BMPO}

\subsubsection{Forward Model}
For the forward model, we use an ensemble of bootstrapped probabilistic dynamics models, which were first introduced in PETS \cite{chua2018deep} to capture two kinds of uncertainty. To be specific, individual probabilistic models can capture the aleatoric uncertainty aroused from the inherent stochasticity of a system, and the bootstrapped ensemble aims to capture the epistemic uncertainty due to the lack of sufficient training data. Prior works \cite{chua2018deep, janner2019trust} have demonstrated that the ensemble of probabilistic models is quite effective in MBRL, even when the ground truth dynamics are deterministic.

In detail, each in the forward model ensembles $\left\{p_{\theta}^{i}\right\}_{i=1}^B$ is parameterized by a multi-layer neural network, and we denote the parameters in the forward model ensembles as $\theta$. Given a state $s$ and an action $a$, each probabilistic neural network outputs a Gaussian distribution with diagonal covariances of the next state: $p_{\theta}^{i}\left(s^{\prime} | s, a\right)=\mathcal{N}\left({\mu}_{\theta}^{i}\left({s}, {a}\right), {\Sigma}_{\theta}^{i}\left({s}, {a}\right)\right)$.\footnote{The model network will output a similar Gaussian distribution of the reward as well, which is omitted here for simplicity.}  We train the model ensembles with different initializations and bootstrapped samples of the real environment data via maximum likelihood, and the corresponding loss of the forward model is
\begin{equation}
\begin{aligned} 
\label{equation:forward model loss}
\mathcal{L}_f({\theta})=&\sum_{t=1}^{N}\left[\mu_{\theta}\left({s}_{t}, {a}_{t}\right)-{s}_{t+1}\right]^{\top} {\Sigma}_{\theta}^{-1}\left({s}_{t}, {a}_{t}\right)
\\& \left[\mu_{\theta}\left({s}_{t}, {a}_{t}\right)-{s}_{t+1}\right]+\log \operatorname{det} {\Sigma}_{\theta}\left({s}_{t}, {a}_{t}\right),
\end{aligned}
\end{equation}
where $\mu$ and $\Sigma$ are the mean and covariance respectively, and $N$ denotes the total number of transition data.

\subsubsection{Backward Model}

Besides the traditional forward model learning, we additionally learn a backward model for extended simulated data generation. In this way, we can mitigate the error caused by excessively exploiting the forward model. In other words, the backward model is used to reduce the burden of the forward model on generating data.

Due to the powerful capabilities of the probabilistic network ensemble, similarly to the forward model, we adopt the same parameterization for the backward model. In detail, another multi-layer neural network with parameters $\theta^{\prime}$ is used to output a Gaussian prediction, \textit{i.e.}, $q_{\theta^{\prime}}^{i}\left(s | s^{\prime}, a\right)=\mathcal{N}\left({\mu}_{\theta^{\prime}}^{i}\left({s^{\prime}}, {a}\right), {\Sigma}_{\theta^{\prime}}^{i}\left({s^{\prime}}, {a}\right)\right)$, and the loss function of the backward model is
\begin{align}
\mathcal{L}_b({\theta^{\prime}})=&\sum_{t=1}^{N}\left[\mu_{\theta^{\prime}}\left({s}_{t+1}, {a}_{t}\right)-{s}_{t}\right]^{\top} {\Sigma}_{\theta^{\prime}}^{-1}\left({s}_{t+1}, {a}_{t}\right) \label{equation:backward model loss}\\
& \left[\mu_{\theta^{\prime}}\left({s}_{t+1}, {a}_{t}\right)-{s}_{t}\right]+\log \operatorname{det} {\Sigma}_{\theta^{\prime}}\left({s}_{t+1}, {a}_{t}\right). \nonumber
\end{align}

We note that although we can directly train the backward model with data sampled from the environment, it remains a problem how to choose the actions when we use the backward model to generate trajectories backwards. Recall that in the forward model situation, the actions are usually taken by the current policy $\pi_{\phi}\left(a | s\right)$ or an exploration policy. Thus we need an additional backward policy $\tilde{\pi}_{\phi^{\prime}}\left(a | s^{\prime}\right)$ to generate actions in the backward rollouts.

\subsection{Policy Optimization in BMPO}

\subsubsection{Backward policy}
\label{section:backward policy}
To sample trajectories backwards, we need to design a backward policy $\tilde{\pi}_{\phi^{\prime}}\left(a | s^{\prime}\right)$ to take action given the next state. There are many alternatives for backward policy design. In our experiments, we use two heuristic methods for comparison.

The first way is to train the backward policy by maximum likelihood estimation according to the $(a, s')$ data sampled in the environment. The corresponding loss function is 
\begin{equation}
\mathcal{L}_{\text{MLE}}({\phi^{\prime}})=-\sum_{t=0}^{N} \log \tilde{\pi}_{\phi^{\prime}}\left(a_{t} | s_{t+1}\right).
\label{equation:backward policy mle loss}
\end{equation}

The second way is to use a conditional generative adversarial network (CGAN) \cite{mirza2014conditional} to generate an action $a$ conditioned on the next state $s^{\prime}$, and the adversarial loss can be written as
\begin{align}
\min _{\tilde{\pi}} \max _{D} V(D, \tilde{\pi})=&\mathbb{E}_{(a,s^{\prime}) \sim \pi}\left[ \log D(a,s^{\prime})\right]+ \label{equation:backward policy gan loss} \\
&\mathbb{E}_{s^{\prime} \sim \pi}\left[\log \left(1-D(\tilde{\pi}(\cdot | s^{\prime}),s^{\prime})\right)\right], \nonumber
\end{align}
where $D$ is an additional discriminator and the generator here is the backward policy $\tilde{\pi}$.

Our goal is to make the backward rollouts resemble the real trajectory sampled by the current forward policy. Thus when training the backward policy, we only use the recent trajectories sampled by the agent in the real environment.

\subsubsection{MBPO with Bidirectional Models}
The original MBPO \cite{janner2019trust} algorithm iterates between three stages: data collection, model learning, and policy optimization. In the data collection stage, data is collected by executing the latest policy in real environment and added to replay buffer $\mathcal{D}_{\mathrm{env}}$. In the model learning stage, an ensemble of bootstrapped forward models are trained using all the data in $\mathcal{D}_{\mathrm{env}}$ through maximum likelihood. Then, in the policy optimization stage, short-length rollouts starting from states randomly chosen from $\mathcal{D}_{\mathrm{env}}$ are generated using the forward model. 
Model-generated data is then used to train a policy $\pi$ (see line 10 in Algorithm \ref{alg:BMPO}) through Soft Actor-Critic (SAC) \cite{haarnoja2018soft} by minimizing the expected KL-divergence $J_{\pi}(\phi, \mathcal{D})=\mathbb{E}_{s_{t} \sim \mathcal{D}}\left[D_{K L}\left(\pi \| \exp \left\{Q^{\pi}-V^{\pi}\right\}\right)\right]$.

Bidirectional models can be incorporated into MBPO naturally. Specifically, we iteratively collect data, train bidirectional models and backward policy, and then use them to generate short-length rollouts bidirectionally starting from some real states to optimize our policy. Besides, in Section \ref{sec:decisions} we design other components specifically for  BMPO, \emph{i.e.}, state sampling and MPC, which are crucial for the improvement of performance.

\subsection{Design Decisions}\label{sec:decisions}
\subsubsection{State Sampling Strategy}
\label{section:sampling strategy}
To better exploit the bidirectional models, inspired by \citet{goyal2018recall}, we begin simulated branched rollouts bidirectionally from high value states instead of randomly chosen states in the environment replay buffer. 
In such a way, the agent could learn to reach high value states through backward rollouts, and also learn to act better after these states through forward rollouts.

An intuitive idea is simply choosing the first $K$ highest-value states to begin rollouts, which, however, will cause the agent not knowing how to behave in low value states due to the absence of low value states data. For better generalization and stability, we choose starting states from the environment replay buffer $\mathcal{D}_{\mathrm{env}}$ according to Boltzmann distribution based on the value $V(s)$ estimated by SAC. Let $p(s)$ be the probability of a state $s$ being chosen as a starting state, then we have
\begin{equation}
\label{eq: state choose possibility}
p(s) \propto e^{\beta V(s)},
\end{equation}
where $\beta$ is the hyperparameter to control the ratio of high value states.

\subsubsection{Incorporating MPC}
In BMPO, when taking actions in the real environment, we adopt model predictive control (MPC) to exploit the forward model further. MPC is a common model-based planning method using the learned model to look forward and optimize actions sequence. In detail, at each timestep, $N$ candidate action sequences  $\mathbf{A}_{t}^{1:N}=\left(a_{t}^{1:N}, \ldots, a_{t+H-1}^{1:N}\right)$ are generated, where $H$ is the planning horizon, and the corresponding trajectories are simulated in the learned model. Then the first action of the sequence that yields the highest accumulated rewards is selected. Here, we use a variant of traditional MPC: (1) we generate action sequences from the current policy like \citet{wang2019exploring} instead of uniform distribution; (2) we add the value estimate of the last state in the simulated trajectory to the planning objective, \textit{i.e.}, 
\begin{equation}
\label{eq: mpc}
a_{t} = \underset{a_{t}^{1:N} \sim \pi} {\text{argmax}} \ \ \sum_{t^{\prime}=t}^{t+H-1} \gamma^{t^{\prime}-t}r\left(s_{t^{\prime}}, a_{t^{\prime}}\right) + \gamma^{H}V(s_{t+H}).
\end{equation}

It is worth noting that we only use MPC in the training phase but not in the evaluation phase. In the training phase, MPC helps select actions to take in real environments, and the obtained high value states will be used to generate simulated rollouts for policy optimization. In the evaluation phase, the trained policy is evaluated directly.

\section{Theoretical Analysis}
\label{sec:theory}
In this section, we theoretically analyze the discrepancy between the expected returns in the real environment and those under the bidirectional branched rollout scheme of BMPO. The proofs can be found in the appendix as provided in supplementary materials.
When we use the bidirectional models to generate simulated rollouts from some encountered state, we assume that the length of backward rollouts is $k_1$, and the length of forward rollouts is $k_2$.
Under such a scheme, we first consider a more general return discrepancy of two arbitrary bidirectional branched rollouts.
\newtheorem{lemma}{Lemma}[section]
\begin{lemma}
(Bidirectional Branched Rollout Returns Bound). Let $\eta_1$, $\eta_2$ be the expected returns of two bidirectional branched rollouts. Out of the branch, we assume that the expected total variation distance between these two dynamics at each timestep $t$ is bounded as $\max _{t} E_{(s,a) \sim p_{1}^{t}(s,a)} D_{TV}\left(p_{1}^{\mathrm {pre }}\left(s^{\prime} | s, a\right) \| p_{2}^{\mathrm {pre }}\left(s^{\prime} | s, a\right)\right) \leq \epsilon_{m}^{\mathrm{pre }}$, similarly, the forward branch dynamic bounded as $\max _{t} E_{(s,a) \sim p_{1}^{t}(s,a)} D_{TV}(p_{1}^{\mathrm {for }}\left(s^{\prime} | s, a\right) \| p_{2}^{\mathrm {for }}\left(s^{\prime} | s, a\right)) \leq \epsilon_{m}^{\mathrm {for }}$, and the backward branch dynamic bounded as $\max _{t} E_{(s^{\prime},a) \sim p_{1}^{t}(s^{\prime},a)} D_{TV}(p_{1}^{\mathrm {back }}\left(s | s^{\prime}, a\right) \| p_{2}^{\mathrm {back }}\left(s | s^{\prime}, a\right))$ $\leq \epsilon_{m}^{\mathrm {back }}$. Likewise, the total variation distance of policy is bounded by $\epsilon_{\pi}^{\mathrm{pre}}$, $\epsilon_{\pi}^{\mathrm{for}}$ and $\epsilon_{\pi}^{\mathrm{back}}$, respectively. Then the returns are bounded as:
\begin{align}
\left|\eta_{1}-\eta_{2}\right| \leq 2 r_{\max} &\Big[\frac{\gamma^{k_1+k_2+1}}{(1-\gamma)^{2}}\left(\epsilon_{m}^{\mathrm{pre}}+\epsilon_{\pi}^{\mathrm{pre}}\right)+\frac{\gamma^{k_1+k_2}}{1-\gamma}\epsilon_{\pi}^{\mathrm{pre}}
\nonumber \\
+&\frac{1-\gamma^{k_1}}{1-\gamma}\left(k_1\left(\epsilon_{m}^{\mathrm{back}}+\epsilon_{\pi}^{\mathrm{back}}\right)+\epsilon_{\pi}^{\mathrm{back}}\right) \nonumber \\ +&\frac{\gamma^{k_1}}{1-\gamma}\left(k_2\left(\epsilon_{m}^{\mathrm{for}}+\epsilon_{\pi}^{\mathrm{for}}\right)+\epsilon_{\pi}^{\mathrm{for}}\right)\Big].\label{eq:Bidiractional_branch_return_Bound}
\end{align}
\end{lemma}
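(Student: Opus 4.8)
The plan is to bound $|\eta_1-\eta_2|$ by a hybrid (telescoping) argument over the three segments of a bidirectional branched rollout --- the length-$k_1$ backward branch, the length-$k_2$ forward branch, and the out-of-branch tail governed by the pre-dynamics --- swapping the dynamics and policy of one segment at a time and controlling each swap with the standard total-variation propagation estimate for branched rollouts. First I fix coordinates by reading the rollout forward in time: steps $0,\dots,k_1-1$ carry the backward transitions, step $k_1$ is the encountered state (taken to have the same distribution under both rollouts, since both branch from the same replay buffer), steps $k_1,\dots,k_1+k_2-1$ carry the forward transitions, and every step $t\ge k_1+k_2$ is governed by the pre-dynamics. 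Writing $\rho_i^{t}$ for the step-$t$ state-action marginal of rollout $i$ and using $|r|\le r_{\max}$, we have $|\mathbb{E}_{\rho_1^t}[r]-\mathbb{E}_{\rho_2^t}[r]|\le 2r_{\max}\,D_{TV}(\rho_1^t\|\rho_2^t)$, so it suffices to bound $\sum_{t\ge0}\gamma^{t}\,D_{TV}(\rho_1^t\|\rho_2^t)$.

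Next I introduce hybrids $H_0,H_1,H_2,H_3$, with $H_0$ equal to rollout $1$, $H_3$ equal to rollout $2$, and $H_j$ obtained from $H_{j-1}$ by replacing the backward ($j=1$), then forward ($j=2$), then pre ($j=3$) segment's dynamics and policy by those of rollout $2$; then $|\eta_1-\eta_2|\le\sum_{j=1}^{3}|\eta_{H_{j-1}}-\eta_{H_j}|$, and in each summand the two processes coincide outside a single segment. For such a pair I invoke the elementary facts that (i) wherever the per-step transition and policy kernels coincide, the $D_{TV}$ of the step marginals does not increase; (ii) on a step where they differ by $\epsilon_m$ and $\epsilon_\pi$ in expectation over the current visited distribution, the $D_{TV}$ grows by at most $\epsilon_m+\epsilon_\pi$; and (iii) at a step inside the divergent segment the reward gap picks up a further $\epsilon_\pi$ from the policy mismatch at that very step, so it is at most $2r_{\max}$ times the accumulated state drift plus $2r_{\max}\epsilon_\pi$. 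The same accumulation estimate applies verbatim to the backward branch: one backward step --- sample $a\sim\tilde\pi(\cdot\,|\,s')$, then $s\sim q(\cdot\,|\,s',a)$ --- is, after marginalizing the action, just a Markov kernel on states, whose per-step divergences are exactly the $\epsilon_m^{\mathrm{back}},\epsilon_\pi^{\mathrm{back}}$ of the lemma.

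The key structural observation is that the backward branch is conditioned to terminate at the encountered state, so the step-$k_1$ marginal is that fixed state for both rollouts no matter which backward model or policy is used; hence backward errors affect only steps $0,\dots,k_1-1$ and never leak past the branch point --- this is exactly why they compound over $k_1$ steps rather than $k_1+k_2$. Bounding the accumulated drift at every in-branch step crudely by $k_1(\epsilon_m^{\mathrm{back}}+\epsilon_\pi^{\mathrm{back}})$, the backward swap contributes at most $2r_{\max}\sum_{i=0}^{k_1-1}\gamma^{i}\bigl(k_1(\epsilon_m^{\mathrm{back}}+\epsilon_\pi^{\mathrm{back}})+\epsilon_\pi^{\mathrm{back}}\bigr)=2r_{\max}\tfrac{1-\gamma^{k_1}}{1-\gamma}\bigl(k_1(\epsilon_m^{\mathrm{back}}+\epsilon_\pi^{\mathrm{back}})+\epsilon_\pi^{\mathrm{back}}\bigr)$. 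For the forward swap the marginals agree at step $k_1$, the drift grows for at most $k_2$ steps and is frozen thereafter (once the shared pre-kernels take over), so from step $k_1$ onward every reward gap is at most $2r_{\max}\bigl(k_2(\epsilon_m^{\mathrm{for}}+\epsilon_\pi^{\mathrm{for}})+\epsilon_\pi^{\mathrm{for}}\bigr)$, contributing $2r_{\max}\tfrac{\gamma^{k_1}}{1-\gamma}\bigl(k_2(\epsilon_m^{\mathrm{for}}+\epsilon_\pi^{\mathrm{for}})+\epsilon_\pi^{\mathrm{for}}\bigr)$. For the pre swap the marginals agree through step $k_1+k_2$, after which the drift at step $k_1+k_2+i$ is at most $i(\epsilon_m^{\mathrm{pre}}+\epsilon_\pi^{\mathrm{pre}})$ (plus $\epsilon_\pi^{\mathrm{pre}}$ at each step), so using $\sum_{i\ge0}i\gamma^{i}=\gamma/(1-\gamma)^2$ this swap contributes $2r_{\max}\bigl(\tfrac{\gamma^{k_1+k_2+1}}{(1-\gamma)^2}(\epsilon_m^{\mathrm{pre}}+\epsilon_\pi^{\mathrm{pre}})+\tfrac{\gamma^{k_1+k_2}}{1-\gamma}\epsilon_\pi^{\mathrm{pre}}\bigr)$. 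Summing the three contributions yields exactly \eqref{eq:Bidiractional_branch_return_Bound}.

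I expect the geometric-series bookkeeping to be routine, and the main obstacle to be the careful propagation of total variation with \emph{expected} rather than worst-case per-step divergences --- the per-step bounds must stay anchored at rollout $1$'s visited distribution and be chained by a coupling/triangle-inequality argument, exactly as in the branched-rollout analysis of \citet{janner2019trust} --- together with the conceptually simple but essential point that conditioning the backward branch on the encountered state confines its error to the $k_1$ backward steps.
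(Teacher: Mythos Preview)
Your proposal is correct and reaches exactly the stated bound, but the route differs from the paper's. The paper does \emph{not} introduce hybrids: it bounds $D_{TV}(d_1^t(s,a)\|d_2^t(s,a))$ directly for each $t$ by chaining one-step state-marginal growth lemmas (a backward version for $t\le k_1$, a forward version for $t>k_1$), splits the time axis into the three ranges $t\le k_1$, $k_1<t\le k_1+k_2$, $t>k_1+k_2$, and then sums $\gamma^t$ over each range. In particular, for $t>k_1+k_2$ the paper's per-step bound carries the forward-branch drift $k_2(\epsilon_m^{\mathrm{for}}+\epsilon_\pi^{\mathrm{for}})$ as a frozen offset in addition to the growing pre-term, whereas your hybrid decomposition attributes that same mass to the separate ``forward swap'' summand. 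Your telescoping buys a cleaner decoupling of the three segments --- especially the observation that the backward swap affects only steps $0,\dots,k_1-1$ and contributes nothing past the branch point --- at the cost of having to watch the anchoring of the expected per-step errors across hybrids (which you correctly flag; ordering the swaps pre\,$\to$\,for\,$\to$\,back keeps every expectation over $p_1^t$ where the assumptions place them). The paper's direct per-$t$ argument avoids that bookkeeping but requires carrying cross-segment offsets inside a single TV bound. Both arguments rest on the same two ingredients --- the one-step TV growth estimate and the fact that the encountered state at step $k_1$ is shared --- and produce the identical constants.
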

\begin{proof}
\vspace{-4pt}
See Appendix \ref{appendix:Theorem}, Lemma \ref{lemma:Bidirectional Branched Rollout Returns Bound}. 
\vspace{-4pt}
\end{proof}

Now, we can bound the discrepancy between the returns in the environment and in the branched rollouts of BMPO. Let $\eta[\pi]$ be the expected return of executing current policy in the true dynamics, and $\eta^{\mathrm {branch }}[\pi]$ be the expected return of executing current policy in the model generated branch and executing old policy $\pi_{D}$ out of the branch. Then we can derive the discrepancy between them as follows.

\begin{theorem}
(BMPO Return Discrepancy Upper Bound) Assume that the expected total variation distance between the learned forward model $\hat{p}$ and the true dynamics $p$ at each timestep $t$ is bounded as 
$\max _{t} E_{(s,a) \sim \pi_{t}}\left[D_{T V}\left(p\left(s^{\prime} | s, a\right) \| \hat{p}\left(s^{\prime} | s, a\right)\right)\right] \leq \epsilon_{m}^{\mathrm{for}}$.
Similarly, the error of backward model $\hat{q}$ is bounded as 
$\max _{t} E_{(s^{\prime},a) \sim \pi_{t}}\left[D_{T V}\left(q\left(s | s^{\prime}, a\right) \| \hat{q}\left(s | s^{\prime}, a\right)\right)\right] \leq \epsilon_{m}^{\mathrm{back}}$ 
and the variation between current policy and the behavioral policy is bounded as 
$\max _{s} D_{T V}\left(\pi_{D}(a | s) \| \pi(a | s)\right) \leq \epsilon_{\pi}$.
Assume $\epsilon_{m}^{\mathrm{for}} \approx \epsilon_{m}^{\mathrm{back}} = \epsilon_{m}$, then under a branched rollouts scheme with a backward branch length of $k_1$ and a forward branch length of $k_2$, we have
\begin{align}
    \left|\eta[\pi]- \eta^{\mathrm {branch }}[\pi] \right| \leq & 2 r_{\max }\Big[\frac{\gamma^{k_1+k_2+1} \epsilon_{\pi}}{(1-\gamma)^{2}}
     \label{eq: BM_lower_Bound}
\\ & +\frac{\gamma^{k_1+k_2} \epsilon_{\pi}}{(1-\gamma)}+\frac{\max(k_1,k_2)\epsilon_{m}}{1-\gamma}\Big]. \nonumber
\end{align}
\end{theorem}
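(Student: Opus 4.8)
The plan is to obtain the statement as a direct specialization of the preceding lemma (the Bidirectional Branched Rollout Returns Bound). The key observation is that both $\eta[\pi]$ and $\eta^{\mathrm{branch}}[\pi]$ can be written as bidirectional branched rollouts sharing the same branch geometry --- a $k_1$-step backward segment and a $k_2$-step forward segment grafted onto an out-of-branch process --- so the lemma applies to the pair $(\eta_1,\eta_2)=(\eta[\pi],\eta^{\mathrm{branch}}[\pi])$ once we read off the six discrepancy parameters for this particular pair.

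First I would set $\eta_1=\eta[\pi]$ to be the branched rollout that uses the true dynamics $p$ together with the current policy $\pi$ in every segment, where the backward segment is reconstructed from the exact time-reversal of the Markov chain induced by $\pi$ under $p$, so that this branched process reproduces $\eta[\pi]$ exactly; and $\eta_2=\eta^{\mathrm{branch}}[\pi]$ the branched rollout that uses $p$ with the behavioral policy $\pi_D$ out of the branch, the learned backward model $\hat q$ with $\pi$ in the backward segment, and the learned forward model $\hat p$ with $\pi$ in the forward segment. Segment by segment: out of the branch the two dynamics coincide, so $\epsilon_m^{\mathrm{pre}}=0$, while the policies differ by the assumed amount, $\epsilon_\pi^{\mathrm{pre}}=\epsilon_\pi$; inside both segments the policy is $\pi$ in each of the two rollouts, so $\epsilon_\pi^{\mathrm{back}}=\epsilon_\pi^{\mathrm{for}}=0$; and the in-branch dynamics differ by the model errors $\epsilon_m^{\mathrm{back}}$ and $\epsilon_m^{\mathrm{for}}$, which the hypothesis $\epsilon_m^{\mathrm{for}}\approx\epsilon_m^{\mathrm{back}}=\epsilon_m$ lets us treat as a single constant $\epsilon_m$.

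Plugging these values into \eqref{eq:Bidiractional_branch_return_Bound} annihilates the $\epsilon_m^{\mathrm{pre}}$ term and both $\epsilon_\pi^{\mathrm{back}},\epsilon_\pi^{\mathrm{for}}$ terms, leaving
\[
\left|\eta[\pi]-\eta^{\mathrm{branch}}[\pi]\right|\le 2r_{\max}\Big[\tfrac{\gamma^{k_1+k_2+1}}{(1-\gamma)^2}\epsilon_\pi+\tfrac{\gamma^{k_1+k_2}}{1-\gamma}\epsilon_\pi+\tfrac{(1-\gamma^{k_1})k_1+\gamma^{k_1}k_2}{1-\gamma}\,\epsilon_m\Big].
\]
The final step is the one-line estimate on the $\epsilon_m$ coefficient: since $1-\gamma^{k_1}$ and $\gamma^{k_1}$ are nonnegative and sum to $1$, we have $(1-\gamma^{k_1})k_1+\gamma^{k_1}k_2\le(1-\gamma^{k_1})\max(k_1,k_2)+\gamma^{k_1}\max(k_1,k_2)=\max(k_1,k_2)$, which turns the displayed inequality into the claimed bound \eqref{eq: BM_lower_Bound}.

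I expect the only genuinely delicate point to be the bookkeeping in the middle step --- specifically, arguing carefully that the backward segment of $\eta[\pi]$ is a well-defined process (the time-reversal of the chain induced by $\pi$ under $p$) against which $\hat q$ paired with the learned backward policy differs only through the model-error term $\epsilon_m^{\mathrm{back}}$, with no leftover policy-shift contribution inside the branch. Everything else is a substitution into the preceding lemma followed by the convexity estimate above.
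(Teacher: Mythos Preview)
Your proposal is correct and matches the paper's proof essentially line for line: the paper also instantiates the Bidirectional Branched Rollout Returns Bound with $\epsilon_m^{\mathrm{pre}}=0$, $\epsilon_\pi^{\mathrm{pre}}=\epsilon_\pi$, $\epsilon_\pi^{\mathrm{for}}=\epsilon_\pi^{\mathrm{back}}=0$, $\epsilon_m^{\mathrm{for}}=\epsilon_m^{\mathrm{back}}=\epsilon_m$, obtains the same intermediate coefficient $(1-\gamma^{k_1})k_1+\gamma^{k_1}k_2$ on $\epsilon_m$, and then applies the identical convex-combination bound to replace it by $\max(k_1,k_2)$. The only difference is that the paper does not discuss the ``delicate point'' you flag about the backward segment of $\eta[\pi]$ being a well-defined reversal of the true chain; it simply asserts the substitutions (and in the appendix version of the theorem statement explicitly assumes $\epsilon_\pi^{\mathrm{back}}=0$).
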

\begin{proof}
\vspace{-4pt}
See Appendix \ref{appendix:Theorem}, Theorem \ref{the:BMPO Return Discrepancy Upper Bound}.
\vspace{-4pt}
\end{proof}

We notice that in MBPO \cite{janner2019trust}, the authors derived a similar return discrepancy bound (refer to Theorem 4.3 therein) with only one forward dynamics model. Setting the forward rollout length as $k_1 + k_2$, the bound is
\begin{align}
   \left| \eta[\pi] - \eta^{\mathrm {branch }}[\pi] \right| \leq & 2 r_{\max }\Big[\frac{\gamma^{k_1+k_2+1} \epsilon_{\pi}}{(1-\gamma)^{2}} \label{eq: FM_low_Bound_k1_k2} \\ & +\frac{\gamma^{k_1+k_2} \epsilon_{\pi}}{(1-\gamma)}+\frac{(k_1+k_2)\epsilon_{m}}{1-\gamma}\Big]. \nonumber
\end{align}

By comparing the two bounds, it is evident that BMPO obtains a tighter upper bound of the return discrepancy by employing bidirectional models. The main difference is that in BMPO the coefficient of the model error is $\max(k_1,k_2)$, while in MBPO the coefficient is $k_1+k_2$. 
It is easy to understand: as we generate the branched rollouts bidirectionally, the model error will compound only $k_1$ steps in the backward model or $k_2$ steps in the forward model and thus at most $\max(k_1,k_2)$ steps,  
instead of $k_1+k_2$ steps using forward model only.

\section{Experiments}
Our experiments aim to answer the following three questions: 1) How does BMPO perform compared with model-free RL methods and previous state-of-the-art model-based RL methods using only one forward model? 2) Does using the bidirectional models reduce compounding error compared with using one forward model? 3) What are the critical components of our overall algorithm?

\begin{figure*}[htb]
	\centering
	\vspace{-4pt}
	\includegraphics[width=1\textwidth]{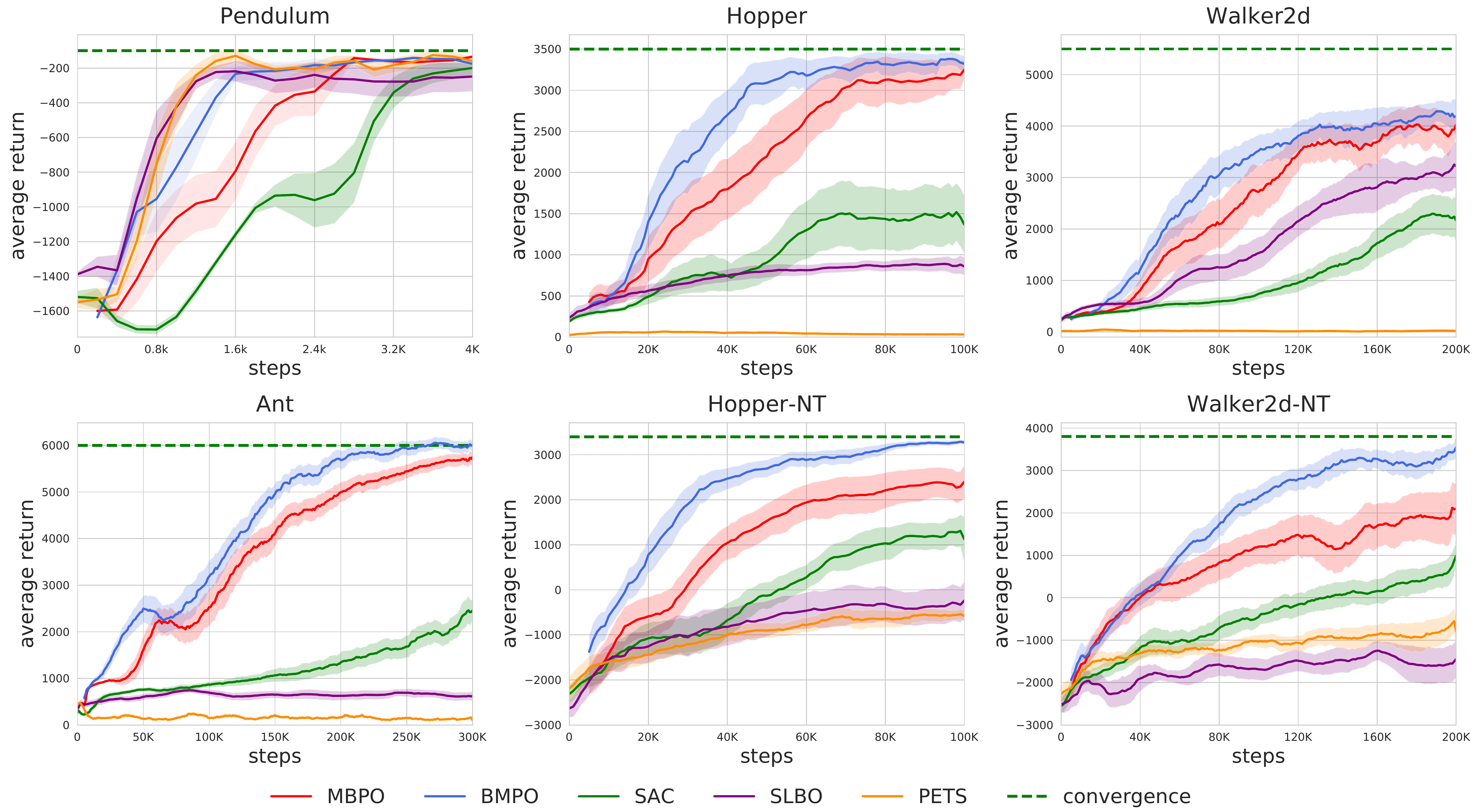}
	\vspace{-15pt}
	\caption{Learning curves of BMPO (ours) and four baselines on different continuous control environments. The solid lines indicate the mean and shaded areas indicate the standard error of six trails over different random seeds. Each trial is evaluated every 1000 environment steps (200 steps for Pendulum), where each evaluation reports the average return over ten episodes. The dashed reference lines are the asymptotic performance of SAC.}
	
	\vspace{-10pt}
	\label{fig:comparison result}
\end{figure*}

\subsection{Comparison with State-of-the-Arts}

In this section, we compare our method with previous state-of-the-art baselines. Specifically, for model-based methods, we compare against MBPO \cite{janner2019trust}, as our method builds on top of it; SLBO \cite{luo2018algorithmic} and PETS \cite{chua2018deep}, both performing well in the model-based benchmarking test \cite{langlois2019benchmarking}. For model-free methods, we compare to Soft Actor-Critic (SAC) \cite{haarnoja2018soft}, which is proved to be effective in continuous control tasks. For the MBPO baseline, we only generate $k_2$-steps forward rollout, where $k_2$ is set as the default value used in the original MBPO paper. We do not report the result of MBPO using rollouts of $k_1+k_2$ steps since using only $k_2$ steps achieves better performance. Notice that we do not include the backtracking model method \cite{goyal2018recall} for comparison since it is not a purely model-based method, and its performance improvement is limited compared with SAC.

We evaluate all the algorithms in six environments in total using OpenAI Gym \cite{brockman2016openai}\footnote{Code is available at: \href{https://github.com/hanglai/bmpo}{https://github.com/hanglai/bmpo}}. Among them, Pendulum is one traditional control task, and Hopper, Walker2D, Ant are three complex MuJoCo tasks \cite{todorov2012mujoco}. We additionally add two variants of MuJoCo tasks without early termination states, denoted as Hopper-NT and Walker2d-NT, which have been released as benchmarking environments for MBRL \cite{langlois2019benchmarking}. More details about the environments can be found in Appendix \ref{appendix:Env}.

The comparison results are shown in Figure \ref{fig:comparison result}. In different locomotion tasks,  our method BMPO learns faster and has better asymptotic performance than previous model-based algorithms using only the forward model, which empirically demonstrates the advantage of bidirectional models. This gap is even more significant in benchmarking environments Hopper-NT and Walker2d-NT. One possible reason is that in the environments without early termination, the space of encountered states is larger, and thus the introduced state sampling strategy is more effective.

\subsection{Model Error}

\begin{figure}[tb] 
    \centering
    \includegraphics[width=0.48\textwidth]{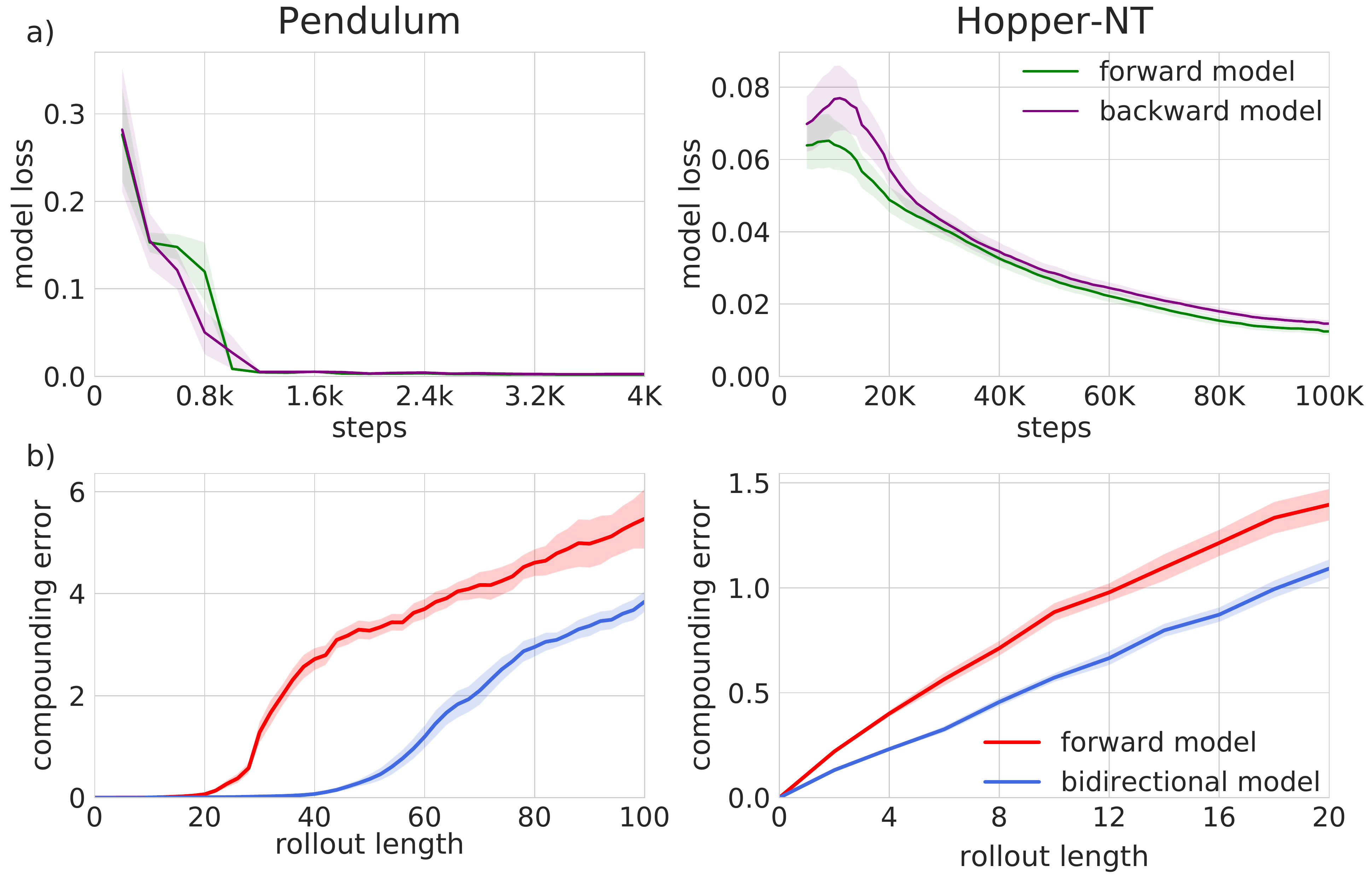}
    \vspace{-10pt}
    \caption{(a) Validation loss of forward/backward models. (b) Compounding error of the forward model and the bidirectional models in environment Pendulum (left) and Hopper-NT (right).}
    \vspace{-15pt}
  \label{fig:model error}
\end{figure}


In this section, we first plot the validation loss of the  models in Figure~\ref{fig:model error}a, which can roughly represent the single-step prediction error. As the figure shows, the difficulty of learning forward/backward models is task-dependent. Then, we investigate the compounding error of traditional forward model and our bidirectional models when generating the same length simulated trajectories. We calculate the multi-step prediction error for evaluation. A similar validation error is also used in \citet{nagabandi2018neural}. More specifically, assume a real trajectory of length $2h$ is denoted as $(s_0,a_0,s_{1},\ldots,s_{2h})$. For the forward model, we sample from $s_0$ and generate forward rollouts $(\hat{s}_0,a_0,\hat{s}_{1},\ldots,\hat{s}_{2h})$ where $\hat{s}_{0}=s_{0}$ and for $i \geq 0$, $\hat{s}_{i+1}=p_{\theta}\left(\hat{s}_{i}, a_{i}\right)$. Then the corresponding compounding error is defined as
\begin{equation*}
\text{Error}_{\text{for}}=\frac{1}{2h} \sum_{i=1}^{2h}\left\|\hat{s}_{i}-s_{i}\right\|_{2}^{2}.
\end{equation*}

\begin{figure*}[ht]
\centering
\subfigure[Backward Policy]{
\includegraphics[width=0.235\textwidth]{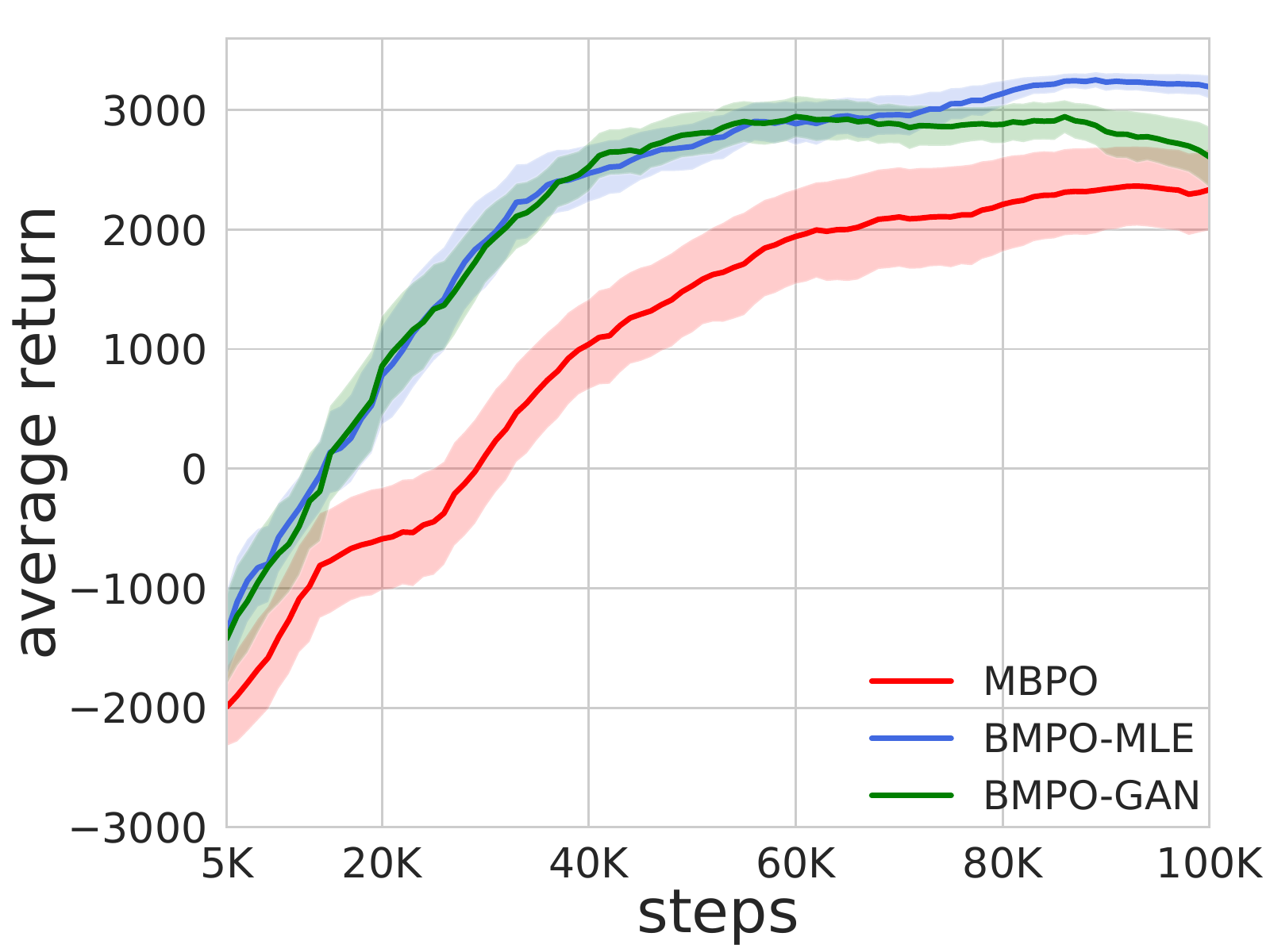}
\label{fig:backward policy}
}
\subfigure[Ablations]{
\includegraphics[width=0.235\textwidth]{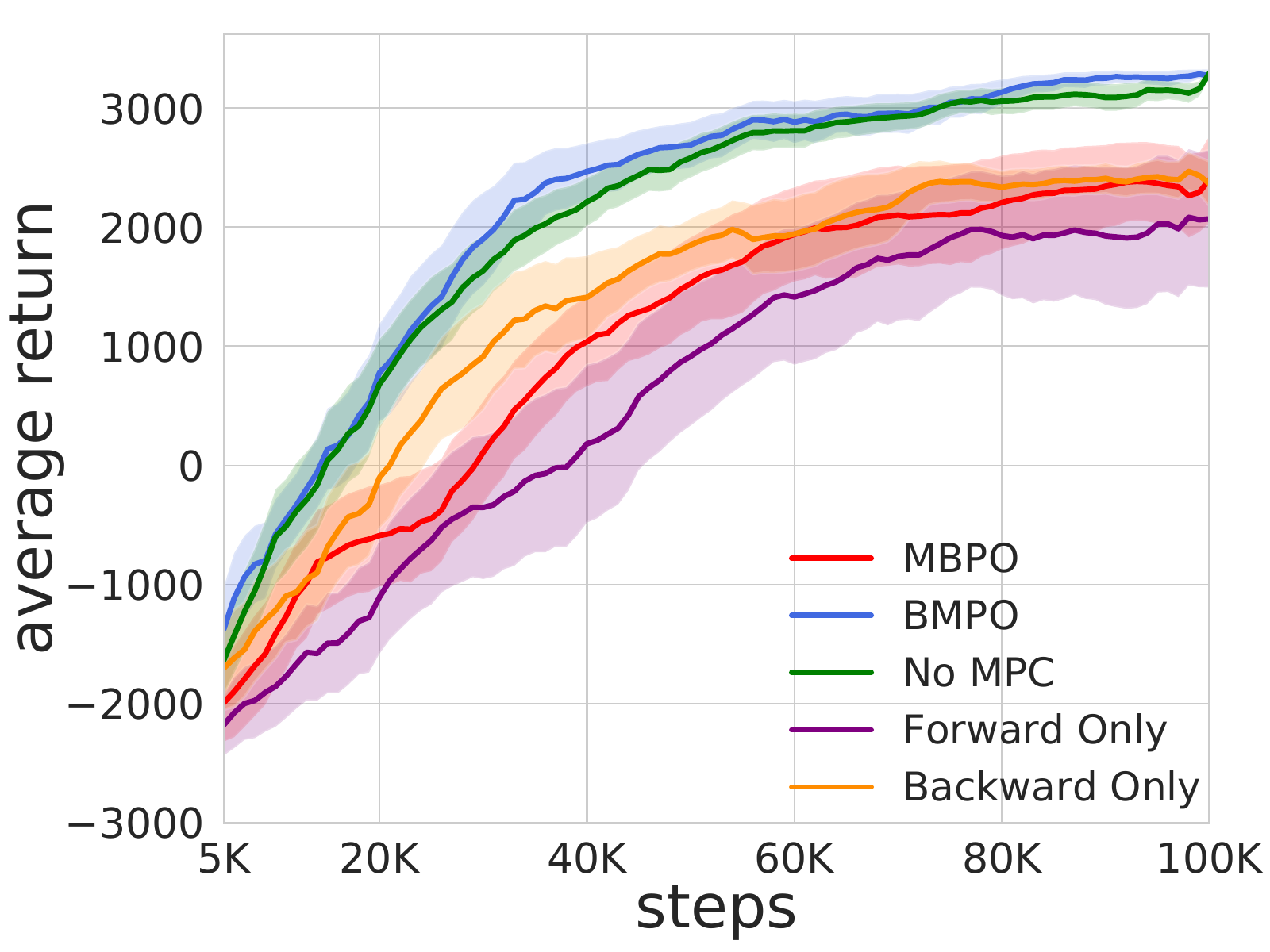}
\label{fig:ablations}
}
\subfigure[$\beta$ in Equation \ref{eq: state choose possibility}]{
\includegraphics[width=0.235\textwidth]{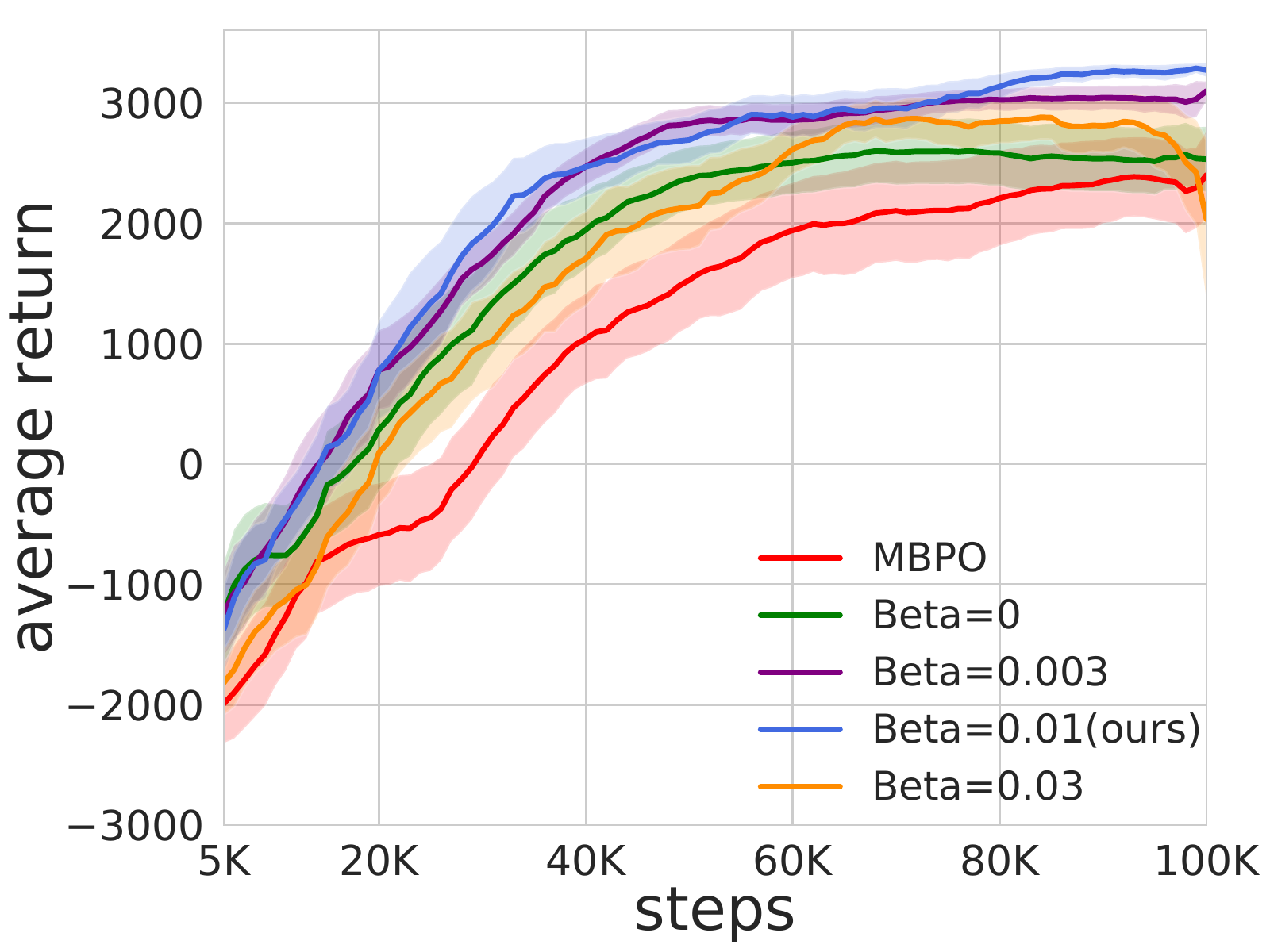}
\label{fig:beta}
}
\subfigure[Backward Length]{
\includegraphics[width=0.235\textwidth]{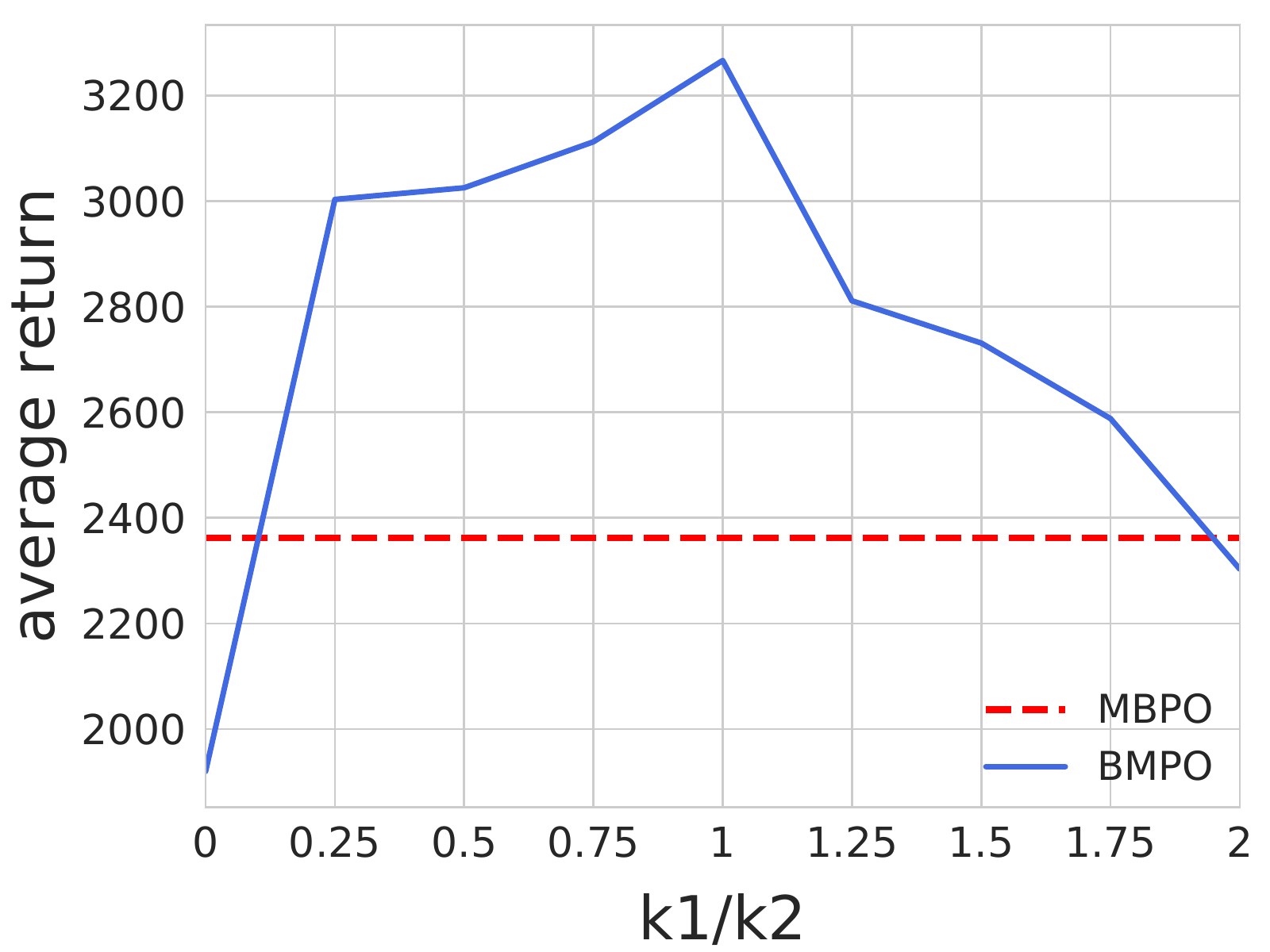}
\label{fig:backward length}
}
\caption{Design evaluation of our algorithm BMPO on task Hopper-NT. (a) Comparison of two heuristic design choices for the backward policy loss: MLE loss and GAN loss.
(b) Ablation study of three crucial components: forward model, backward model, and MPC.
(c) We study the sensitivity of our algorithm to the hyperparameter $\beta$ in Equation \ref{eq: state choose possibility}.
(d) Average return of the last 10 epochs over six trials with different backward rollout lengths $k_1$ and fixed forward length $k_2$.}
\label{fig:design evaluation}
\end{figure*}

Similarly, for the bidirectional models, suppose we sample from $s_{h}$ and generate both forward and backward rollouts of $h$ steps, compounding error is defined as
\begin{equation*}
\text{Error}_{\text{bi}}=\frac{1}{2h} \sum_{i=1}^{h}\left(\left\|\hat{s}_{h+i}-s_{h+i}\right\|_{2}^{2} + 
\left\|\hat{s}_{h-i}-s_{h-i}\right\|_{2}^{2} \right) ,
\end{equation*}
where $\hat{s}_{h}=s_{h}$ and for $i \geq 0$, $\hat{s}_{h+i+1}=p_{\theta}\left(\hat{s}_{h+i}, a_{h+i}\right)$ and $\hat{s}_{h-i-1}=q_{\theta^{\prime}}\left(\hat{s}_{h-i}, a_{h-i-1}\right)$.

We conduct experiments with different rollout length $2h$ and plot the results in Figure~\ref{fig:model error}b. We observe that employing bidirectional models can significantly reduce model compounding error, which is consistent with our intuition and theoretical analysis.

\subsection{Design Evaluation}
In this section, we evaluate the importance of each design decision for our overall algorithm. The results are demonstrated in Figure \ref{fig:design evaluation}.

\subsubsection{Backward Policy Design}
In Figure \ref{fig:backward policy}, we compare two heuristic design choices for the backward policy loss: MLE loss and GAN loss, which are described in detail in Section \ref{section:backward policy}. From the comparison, we notice a slight performance degradation in the late iterations of training when adopting GAN loss. One possible reason may be the instability and model collapse of GAN \cite{goodfellow2016nips}. More advanced methods to stabilize GAN's training can be incorporated into backward policy training, which remains as future work. As for other comparisons in this paper, we use the MLE loss for the backward policy as default.

\subsubsection{Ablation Study}
We further carry out an ablation study to characterize the importance of three main components of our algorithm: 1) no backward model to sample trajectory (Forward Only); \ 2) no forward model to sample trajectory (Backward Only); \ 3) no MPC when interacting with the environment (No MPC). The results are shown in Figure \ref{fig:ablations}. We find that ablating MPC decreases the performance slightly, and using only one forward model or backward model causes a more severe performance dropping. This further reveals the superiority of using bidirectional models. Notice that the performance of not using backward model is even worse than the vanilla MBPO, which means that sampling from high value states does not benefit the traditional forward model since the agent may only focus on how to act at high value states, but not care how to reach these states.

\subsubsection{Hyperparameter Study}
In this section, we investigate the sensitivity of BMPO to the hyperparameters.
First, we test BMPO with different hyperparameter \textbf{$\beta$} used in Equation \ref{eq: state choose possibility}. We vary \textbf{$\beta$} from $0$ to a relatively large number, where $\beta=0$ means random sampling, and larger $\beta$ means focusing more on high value states. The results are shown in Figure \ref{fig:beta}. We observe that up to a certain level, increasing \textbf{$\beta$} yields better performance while too large \textbf{$\beta$} can degrade the performance. This may be due to the fact that with a too large \textbf{$\beta$}, low value states are almost impossible to be chosen, and the value estimation at the beginning is difficult. Nevertheless, our algorithm with different \textbf{$\beta$} all outperform the baseline, which indicates the robustness to $\beta$.

Though it has been discovered that linearly increasing rollout length achieves excellent performance \cite{janner2019trust}, it remains a problem that how to choose the backward rollout length $k_1$ according to the forward length $k_2$. We fix $k_2$ to be the same as \citet{janner2019trust} and vary $k_1$ from 0 to $2k_2$. As is shown in Figure \ref{fig:backward length}, setting $k_1=k_2$ provides the best result, while too short and too long backward length $k_1$ are both detrimental. In practice, we use $k_1=k_2$ in most environments except Ant, where the backward model error is too large compared with forward model error. All hyperparameters settings are provided in Appendix \ref{appendix:Hyperparameter}.

\section{Conclusion}
In this work, we present a novel model-based reinforcement learning method, namely bidirectional model-based policy optimization (BMPO), using the newly introduced bidirectional models. We theoretically prove the advantage of bidirectional models by deriving a tighter return discrepancy upper bound of RL objective compared with only one forward model. Experimental results show that BMPO achieves better asymptotic performance and higher sample efficiency than previous state-of-the-art model-based methods on several benchmark continuous control tasks. For future work, we will investigate the usage of bidirectional models in other model-based RL frameworks and study how to leverage the bidirectional models better.

\section*{Acknowledgments}
The corresponding author Weinan Zhang thanks the support of "New Generation of AI 2030" Major Project 2018AAA0100900 and NSFC (61702327, 61772333, 61632017).

\bibliography{reference}

\begin{thebibliography}{47}
\providecommand{\natexlab}[1]{#1}
\providecommand{\url}[1]{\texttt{#1}}
\expandafter\ifx\csname urlstyle\endcsname\relax
  \providecommand{\doi}[1]{doi: #1}\else
  \providecommand{\doi}{doi: \begingroup \urlstyle{rm}\Url}\fi

\bibitem[Asadi et~al.(2018{\natexlab{a}})Asadi, Cater, Misra, and
  Littman]{asadi2018towards}
Asadi, K., Cater, E., Misra, D., and Littman, M.~L.
\newblock Towards a simple approach to multi-step model-based reinforcement
  learning.
\newblock \emph{arXiv preprint arXiv:1811.00128}, 2018{\natexlab{a}}.

\bibitem[Asadi et~al.(2018{\natexlab{b}})Asadi, Misra, and
  Littman]{asadi2018lipschitz}
Asadi, K., Misra, D., and Littman, M.~L.
\newblock Lipschitz continuity in model-based reinforcement learning.
\newblock \emph{arXiv preprint arXiv:1804.07193}, 2018{\natexlab{b}}.

\bibitem[Asadi et~al.(2019)Asadi, Misra, Kim, and Littman]{asadi2019combating}
Asadi, K., Misra, D., Kim, S., and Littman, M.~L.
\newblock Combating the compounding-error problem with a multi-step model.
\newblock \emph{arXiv preprint arXiv:1905.13320}, 2019.

\bibitem[Brockman et~al.(2016)Brockman, Cheung, Pettersson, Schneider,
  Schulman, Tang, and Zaremba]{brockman2016openai}
Brockman, G., Cheung, V., Pettersson, L., Schneider, J., Schulman, J., Tang,
  J., and Zaremba, W.
\newblock Openai gym.
\newblock \emph{arXiv preprint arXiv:1606.01540}, 2016.

\bibitem[Camacho \& Alba(2013)Camacho and Alba]{camacho2013model}
Camacho, E.~F. and Alba, C.~B.
\newblock \emph{Model predictive control}.
\newblock Springer Science \& Business Media, 2013.

\bibitem[Chua et~al.(2018)Chua, Calandra, McAllister, and Levine]{chua2018deep}
Chua, K., Calandra, R., McAllister, R., and Levine, S.
\newblock Deep reinforcement learning in a handful of trials using
  probabilistic dynamics models.
\newblock In \emph{Advances in Neural Information Processing Systems}, pp.\
  4754--4765, 2018.

\bibitem[Dean et~al.(2017)Dean, Mania, Matni, Recht, and Tu]{dean2017sample}
Dean, S., Mania, H., Matni, N., Recht, B., and Tu, S.
\newblock On the sample complexity of the linear quadratic regulator.
\newblock \emph{arXiv preprint arXiv:1710.01688}, 2017.

\bibitem[Deisenroth \& Rasmussen(2011)Deisenroth and
  Rasmussen]{deisenroth2011pilco}
Deisenroth, M. and Rasmussen, C.~E.
\newblock Pilco: A model-based and data-efficient approach to policy search.
\newblock In \emph{Proceedings of the 28th International Conference on machine
  learning (ICML-11)}, pp.\  465--472, 2011.

\bibitem[Deisenroth et~al.(2013)Deisenroth, Neumann, Peters,
  et~al.]{deisenroth2013survey}
Deisenroth, M.~P., Neumann, G., Peters, J., et~al.
\newblock A survey on policy search for robotics.
\newblock \emph{Foundations and Trends{\textregistered} in Robotics},
  2\penalty0 (1--2):\penalty0 1--142, 2013.

\bibitem[Draeger et~al.(1995)Draeger, Engell, and Ranke]{draeger1995model}
Draeger, A., Engell, S., and Ranke, H.
\newblock Model predictive control using neural networks.
\newblock \emph{IEEE Control Systems Magazine}, 15\penalty0 (5):\penalty0
  61--66, 1995.

\bibitem[Edwards et~al.(2018)Edwards, Downs, and Davidson]{edwards2018forward}
Edwards, A.~D., Downs, L., and Davidson, J.~C.
\newblock Forward-backward reinforcement learning.
\newblock \emph{arXiv preprint arXiv:1803.10227}, 2018.

\bibitem[Freeman et~al.(2019)Freeman, Metz, and Ha]{Freeman2019LearningTP}
Freeman, C.~D., Metz, L., and Ha, D.
\newblock Learning to predict without looking ahead: World models without
  forward prediction.
\newblock \emph{ArXiv}, abs/1910.13038, 2019.

\bibitem[Gal et~al.(2016)Gal, McAllister, and Rasmussen]{gal2016improving}
Gal, Y., McAllister, R., and Rasmussen, C.~E.
\newblock Improving pilco with bayesian neural network dynamics models.
\newblock In \emph{Data-Efficient Machine Learning workshop, ICML}, volume~4,
  2016.

\bibitem[Goodfellow(2016)]{goodfellow2016nips}
Goodfellow, I.
\newblock Nips 2016 tutorial: Generative adversarial networks.
\newblock \emph{arXiv preprint arXiv:1701.00160}, 2016.

\bibitem[Goyal et~al.(2018)Goyal, Brakel, Fedus, Singhal, Lillicrap, Levine,
  Larochelle, and Bengio]{goyal2018recall}
Goyal, A., Brakel, P., Fedus, W., Singhal, S., Lillicrap, T., Levine, S.,
  Larochelle, H., and Bengio, Y.
\newblock Recall traces: Backtracking models for efficient reinforcement
  learning.
\newblock \emph{arXiv preprint arXiv:1804.00379}, 2018.

\bibitem[Haarnoja et~al.(2018)Haarnoja, Zhou, Abbeel, and
  Levine]{haarnoja2018soft}
Haarnoja, T., Zhou, A., Abbeel, P., and Levine, S.
\newblock Soft actor-critic: Off-policy maximum entropy deep reinforcement
  learning with a stochastic actor.
\newblock \emph{arXiv preprint arXiv:1801.01290}, 2018.

\bibitem[Janner et~al.(2019)Janner, Fu, Zhang, and Levine]{janner2019trust}
Janner, M., Fu, J., Zhang, M., and Levine, S.
\newblock When to trust your model: Model-based policy optimization.
\newblock \emph{arXiv preprint arXiv:1906.08253}, 2019.

\bibitem[Kaiser et~al.(2019)Kaiser, Babaeizadeh, Milos, Osinski, Campbell,
  Czechowski, Erhan, Finn, Kozakowski, Levine, et~al.]{kaiser2019model}
Kaiser, L., Babaeizadeh, M., Milos, P., Osinski, B., Campbell, R.~H.,
  Czechowski, K., Erhan, D., Finn, C., Kozakowski, P., Levine, S., et~al.
\newblock Model-based reinforcement learning for atari.
\newblock \emph{arXiv preprint arXiv:1903.00374}, 2019.

\bibitem[Ko et~al.(2007)Ko, Klein, Fox, and Haehnel]{ko2007gaussian}
Ko, J., Klein, D.~J., Fox, D., and Haehnel, D.
\newblock Gaussian processes and reinforcement learning for identification and
  control of an autonomous blimp.
\newblock In \emph{Proceedings 2007 ieee international conference on robotics
  and automation}, pp.\  742--747. IEEE, 2007.

\bibitem[Kumar et~al.(2016)Kumar, Todorov, and Levine]{kumar2016optimal}
Kumar, V., Todorov, E., and Levine, S.
\newblock Optimal control with learned local models: Application to dexterous
  manipulation.
\newblock In \emph{2016 IEEE International Conference on Robotics and
  Automation (ICRA)}, pp.\  378--383. IEEE, 2016.

\bibitem[Kurutach et~al.(2018)Kurutach, Clavera, Duan, Tamar, and
  Abbeel]{kurutach2018model}
Kurutach, T., Clavera, I., Duan, Y., Tamar, A., and Abbeel, P.
\newblock Model-ensemble trust-region policy optimization.
\newblock \emph{arXiv preprint arXiv:1802.10592}, 2018.

\bibitem[Kuss \& Rasmussen(2004)Kuss and Rasmussen]{kuss2004gaussian}
Kuss, M. and Rasmussen, C.~E.
\newblock Gaussian processes in reinforcement learning.
\newblock In \emph{Advances in neural information processing systems}, pp.\
  751--758, 2004.

\bibitem[Langlois et~al.(2019)Langlois, Zhang, Zhang, Abbeel, and
  Ba]{langlois2019benchmarking}
Langlois, E., Zhang, S., Zhang, G., Abbeel, P., and Ba, J.
\newblock Benchmarking model-based reinforcement learning.
\newblock \emph{arXiv preprint arXiv:1907.02057}, 2019.

\bibitem[Levine \& Abbeel(2014)Levine and Abbeel]{levine2014learning}
Levine, S. and Abbeel, P.
\newblock Learning neural network policies with guided policy search under
  unknown dynamics.
\newblock In \emph{Advances in Neural Information Processing Systems}, pp.\
  1071--1079, 2014.

\bibitem[Levine \& Koltun(2013)Levine and Koltun]{levine2013guided}
Levine, S. and Koltun, V.
\newblock Guided policy search.
\newblock In \emph{International Conference on Machine Learning}, pp.\  1--9,
  2013.

\bibitem[Lillicrap et~al.(2015)Lillicrap, Hunt, Pritzel, Heess, Erez, Tassa,
  Silver, and Wierstra]{lillicrap2015continuous}
Lillicrap, T.~P., Hunt, J.~J., Pritzel, A., Heess, N., Erez, T., Tassa, Y.,
  Silver, D., and Wierstra, D.
\newblock Continuous control with deep reinforcement learning.
\newblock \emph{arXiv preprint arXiv:1509.02971}, 2015.

\bibitem[Luo et~al.(2018)Luo, Xu, Li, Tian, Darrell, and
  Ma]{luo2018algorithmic}
Luo, Y., Xu, H., Li, Y., Tian, Y., Darrell, T., and Ma, T.
\newblock Algorithmic framework for model-based deep reinforcement learning
  with theoretical guarantees.
\newblock \emph{arXiv preprint arXiv:1807.03858}, 2018.

\bibitem[Mirza \& Osindero(2014)Mirza and Osindero]{mirza2014conditional}
Mirza, M. and Osindero, S.
\newblock Conditional generative adversarial nets.
\newblock \emph{arXiv preprint arXiv:1411.1784}, 2014.

\bibitem[Mishra et~al.(2017)Mishra, Abbeel, and Mordatch]{mishra2017prediction}
Mishra, N., Abbeel, P., and Mordatch, I.
\newblock Prediction and control with temporal segment models.
\newblock In \emph{Proceedings of the 34th International Conference on Machine
  Learning-Volume 70}, pp.\  2459--2468. JMLR. org, 2017.

\bibitem[Mnih et~al.(2015)Mnih, Kavukcuoglu, Silver, Rusu, Veness, Bellemare,
  Graves, Riedmiller, Fidjeland, Ostrovski, et~al.]{mnih2015human}
Mnih, V., Kavukcuoglu, K., Silver, D., Rusu, A.~A., Veness, J., Bellemare,
  M.~G., Graves, A., Riedmiller, M., Fidjeland, A.~K., Ostrovski, G., et~al.
\newblock Human-level control through deep reinforcement learning.
\newblock \emph{Nature}, 518\penalty0 (7540):\penalty0 529, 2015.

\bibitem[Nagabandi et~al.(2018)Nagabandi, Kahn, Fearing, and
  Levine]{nagabandi2018neural}
Nagabandi, A., Kahn, G., Fearing, R.~S., and Levine, S.
\newblock Neural network dynamics for model-based deep reinforcement learning
  with model-free fine-tuning.
\newblock In \emph{2018 IEEE International Conference on Robotics and
  Automation (ICRA)}, pp.\  7559--7566. IEEE, 2018.

\bibitem[Nguyen et~al.()Nguyen, Singh, and Tran]{nguyenimproving}
Nguyen, N.~M., Singh, A., and Tran, K.
\newblock Improving model-based rl with adaptive rollout using uncertainty
  estimation.

\bibitem[Parr et~al.(2008)Parr, Li, Taylor, Painter-Wakefield, and
  Littman]{parr2008analysis}
Parr, R., Li, L., Taylor, G., Painter-Wakefield, C., and Littman, M.~L.
\newblock An analysis of linear models, linear value-function approximation,
  and feature selection for reinforcement learning.
\newblock In \emph{Proceedings of the 25th international conference on Machine
  learning}, pp.\  752--759. ACM, 2008.

\bibitem[Rajeswaran et~al.(2016)Rajeswaran, Ghotra, Ravindran, and
  Levine]{rajeswaran2016epopt}
Rajeswaran, A., Ghotra, S., Ravindran, B., and Levine, S.
\newblock Epopt: Learning robust neural network policies using model ensembles.
\newblock \emph{arXiv preprint arXiv:1610.01283}, 2016.

\bibitem[Schulman et~al.(2017)Schulman, Wolski, Dhariwal, Radford, and
  Klimov]{schulman2017proximal}
Schulman, J., Wolski, F., Dhariwal, P., Radford, A., and Klimov, O.
\newblock Proximal policy optimization algorithms.
\newblock \emph{arXiv preprint arXiv:1707.06347}, 2017.

\bibitem[Sun et~al.(2018)Sun, Gordon, Boots, and Bagnell]{sun2018dual}
Sun, W., Gordon, G.~J., Boots, B., and Bagnell, J.
\newblock Dual policy iteration.
\newblock In \emph{Advances in Neural Information Processing Systems}, pp.\
  7059--7069, 2018.

\bibitem[Sutton(1991)]{sutton1991dyna}
Sutton, R.~S.
\newblock Dyna, an integrated architecture for learning, planning, and
  reacting.
\newblock \emph{ACM Sigart Bulletin}, 2\penalty0 (4):\penalty0 160--163, 1991.

\bibitem[Sutton \& Barto(2018)Sutton and Barto]{sutton2018reinforcement}
Sutton, R.~S. and Barto, A.~G.
\newblock \emph{Reinforcement learning: An introduction}.
\newblock MIT press, 2018.

\bibitem[Sutton et~al.(2012)Sutton, Szepesv{\'a}ri, Geramifard, and
  Bowling]{sutton2012dyna}
Sutton, R.~S., Szepesv{\'a}ri, C., Geramifard, A., and Bowling, M.~P.
\newblock Dyna-style planning with linear function approximation and
  prioritized sweeping.
\newblock \emph{arXiv preprint arXiv:1206.3285}, 2012.

\bibitem[Szita \& Szepesv{\'a}ri(2010)Szita and Szepesv{\'a}ri]{szita2010model}
Szita, I. and Szepesv{\'a}ri, C.
\newblock Model-based reinforcement learning with nearly tight exploration
  complexity bounds.
\newblock In \emph{Proceedings of the 27th International Conference on Machine
  Learning (ICML-10)}, pp.\  1031--1038, 2010.

\bibitem[Talvitie(2017)]{talvitie2017self}
Talvitie, E.
\newblock Self-correcting models for model-based reinforcement learning.
\newblock In \emph{Thirty-First AAAI Conference on Artificial Intelligence},
  2017.

\bibitem[Todorov et~al.(2012)Todorov, Erez, and Tassa]{todorov2012mujoco}
Todorov, E., Erez, T., and Tassa, Y.
\newblock Mujoco: A physics engine for model-based control.
\newblock In \emph{2012 IEEE/RSJ International Conference on Intelligent Robots
  and Systems}, pp.\  5026--5033. IEEE, 2012.

\bibitem[Venkatraman et~al.(2015)Venkatraman, Hebert, and
  Bagnell]{venkatraman2015improving}
Venkatraman, A., Hebert, M., and Bagnell, J.~A.
\newblock Improving multi-step prediction of learned time series models.
\newblock In \emph{Twenty-Ninth AAAI Conference on Artificial Intelligence},
  2015.

\bibitem[Wang \& Ba(2019)Wang and Ba]{wang2019exploring}
Wang, T. and Ba, J.
\newblock Exploring model-based planning with policy networks.
\newblock \emph{arXiv preprint arXiv:1906.08649}, 2019.

\bibitem[Whitney \& Fergus(2018)Whitney and Fergus]{whitney2018understanding}
Whitney, W. and Fergus, R.
\newblock Understanding the asymptotic performance of model-based rl methods.
\newblock 2018.

\bibitem[Wu et~al.(2019)Wu, Fan, Ramadge, and Su]{wu2019model}
Wu, Y.-H., Fan, T.-H., Ramadge, P.~J., and Su, H.
\newblock Model imitation for model-based reinforcement learning.
\newblock \emph{arXiv preprint arXiv:1909.11821}, 2019.

\bibitem[Xiao et~al.(2019)Xiao, Wu, Ma, Schuurmans, and
  M{\"u}ller]{xiao2019learning}
Xiao, C., Wu, Y., Ma, C., Schuurmans, D., and M{\"u}ller, M.
\newblock Learning to combat compounding-error in model-based reinforcement
  learning.
\newblock \emph{arXiv preprint arXiv:1912.11206}, 2019.

\end{thebibliography}

\onecolumn
\appendix

\section{BMPO Performance Guarantee}
\label{appendix:Theorem}

\begin{figure}[htb]
    \centering
    \includegraphics[width=0.7\textwidth]{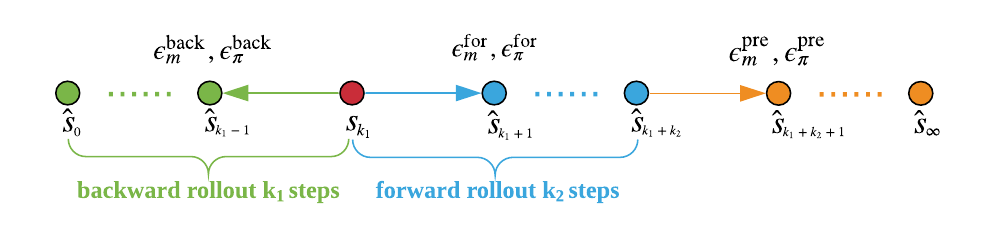}
    \caption{Bidirectional rollout.}
    \label{fig:bidirectional rollout}
\end{figure}

\begin{lemma}
\label{lemma:Bidirectional Branched Rollout Returns Bound}
(Bidirectional Branched Rollout Returns Bound). Let $\eta_1$, $\eta_2$ be the expected returns of two bidirectional branched rollouts. Out of the branch, we assume that the expected total variation distance between these two dynamics at each timestep $t$ is bounded as 
$\max _{t} E_{(s,a) \sim p_{1}^{t}(s,a)} D_{TV}\left(p_{1}^{\mathrm {pre }}\left(s^{\prime} | s, a\right) \| p_{2}^{\mathrm {pre }}\left(s^{\prime} | s, a\right)\right) \leq \epsilon_{m}^{\mathrm {pre }}$, similarly, the forward branch dynamic bounded as
$\max _{t} E_{(s,a) \sim p_{1}^{t}(s,a)} D_{TV}(p_{1}^{\mathrm {for }}\left(s^{\prime} | s, a\right) \| p_{2}^{\mathrm {for }}\left(s^{\prime} | s, a\right)) \leq \epsilon_{m}^{\mathrm {for }}$, and the backward branch dynamic bounded as 
$\max _{t} E_{(s^{\prime},a) \sim p_{1}^{t}(s^{\prime},a)} D_{TV}(p_{1}^{\mathrm {back }}\left(s | s^{\prime}, a\right) \| p_{2}^{\mathrm {back }}\left(s | s^{\prime}, a\right))\leq \epsilon_{m}^{\mathrm {back }}$. Likewise, the total variation distance of policy is bounded by $\epsilon_{\pi}^{\mathrm{pre}}$, $\epsilon_{\pi}^{\mathrm{for}}$ and $\epsilon_{\pi}^{\mathrm{back}}$, respectively (as Figure \ref{fig:bidirectional rollout} shows). Then the returns are bounded as
\begin{equation}
\begin{aligned}
\label{eq:Bidiractional_branch_return_Bound_append}
\left|\eta_{1}-\eta_{2}\right| \leq 2 r_{\max}\left[\frac{\gamma^{k_1+k_2+1}}{(1-\gamma)^{2}}\left(\epsilon_{m}^{\mathrm{pre}}+\epsilon_{\pi}^{\mathrm{pre}}\right)+\frac{\gamma^{k_1+k_2}}{1-\gamma}\epsilon_{\pi}^{\mathrm{pre}}+\frac{1-\gamma^{k_1}}{1-\gamma}\left(k_1\left(\epsilon_{m}^{\mathrm{back}}+\epsilon_{\pi}^{\mathrm{back}}\right)+\epsilon_{\pi}^{\mathrm{back}}\right)
\right.
\\ \left. +\frac{\gamma^{k_1}}{1-\gamma}\left(k_2\left(\epsilon_{m}^{\mathrm{for}}+\epsilon_{\pi}^{\mathrm{for}}\right)+\epsilon_{\pi}^{\mathrm{for}}\right)\right].
\end{aligned}
\end{equation}
\end{lemma}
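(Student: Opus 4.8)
The plan is to follow the telescoping-marginal argument used by \citet{janner2019trust} for their branched-returns lemma (itself building on \citet{luo2018algorithmic}), extended to a timeline with two model phases instead of one. First I would write each return as a discounted sum of per-step expected rewards, $\eta_i=\sum_{t\ge 0}\gamma^t\,\mathbb{E}_{(s,a)\sim p_i^t}[r(s,a)]$, where $p_i^t$ is the joint state--action distribution that rollout $i$ induces at step $t$, reading the bidirectional rollout along the discounted timeline as in Figure \ref{fig:bidirectional rollout}: first the $k_1$-step backward branch, then the $k_2$-step forward branch, then the out-of-branch (``pre'') continuation. Since $|r|\le r_{\max}$, bounding the reward gap by the marginal total variation gives
\[
|\eta_1-\eta_2|\ \le\ 2r_{\max}\sum_{t\ge 0}\gamma^t\,D_{TV}\!\big(p_1^t(s,a)\,\|\,p_2^t(s,a)\big),
\]
so everything reduces to controlling $\delta_t:=D_{TV}(p_1^t\|p_2^t)$ as $t$ runs through the three phases.

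Second I would establish how $\delta_t$ accumulates. The workhorse sub-lemma is the standard one-step estimate: if at step $t$ the two rollouts use transition kernels within $\epsilon_m$ and policies within $\epsilon_\pi$ in expected TV, then $D_{TV}(p_1^{t+1}(s)\|p_2^{t+1}(s))\le D_{TV}(p_1^t(s)\|p_2^t(s))+\epsilon_m+\epsilon_\pi$, with one extra $\epsilon_\pi$ when passing from state to state--action marginals; this is a one-step coupling / kernel-swap argument. Applying it phase by phase: over the backward branch $\delta$ grows by at most $\epsilon_m^{\mathrm{back}}+\epsilon_\pi^{\mathrm{back}}$ per step (plus an initial $\epsilon_\pi^{\mathrm{back}}$), over the forward branch by at most $\epsilon_m^{\mathrm{for}}+\epsilon_\pi^{\mathrm{for}}$ per step, and over the ``pre'' tail by at most $\epsilon_m^{\mathrm{pre}}+\epsilon_\pi^{\mathrm{pre}}$ per step. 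The one wrinkle is that in the backward branch the transition is the backward model $q(s\mid s',a)$ together with $\tilde\pi(a\mid s')$, i.e., the chain is Markov in reversed time; this causes no real difficulty because the object being compared is the TV distance between the two rollouts' trajectory distributions, and the same hybrid argument swaps one (backward) kernel at a time exactly as in the forward case, so the $k_1$ backward steps contribute accumulated TV at most $k_1(\epsilon_m^{\mathrm{back}}+\epsilon_\pi^{\mathrm{back}})+\epsilon_\pi^{\mathrm{back}}$.

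Third I would substitute the phase-wise bounds on $\delta_t$ into $2r_{\max}\sum_t\gamma^t\delta_t$ and split the sum at $t=k_1$ and $t=k_1+k_2$. In each block one bounds $\delta_t$ by its maximum over that block and pulls out the smallest discount factor occurring there: the backward block ($\sum_{t=0}^{k_1-1}\gamma^t=\frac{1-\gamma^{k_1}}{1-\gamma}$) produces the $\frac{1-\gamma^{k_1}}{1-\gamma}\big(k_1(\epsilon_m^{\mathrm{back}}+\epsilon_\pi^{\mathrm{back}})+\epsilon_\pi^{\mathrm{back}}\big)$ term; the forward block (discounted by at least $\gamma^{k_1}$, summing $\tfrac{1}{1-\gamma}$) produces the $\frac{\gamma^{k_1}}{1-\gamma}\big(k_2(\epsilon_m^{\mathrm{for}}+\epsilon_\pi^{\mathrm{for}})+\epsilon_\pi^{\mathrm{for}}\big)$ term; and the tail (discounted by at least $\gamma^{k_1+k_2}$, with $\delta_t$ growing linearly) produces the $\frac{\gamma^{k_1+k_2+1}}{(1-\gamma)^2}(\epsilon_m^{\mathrm{pre}}+\epsilon_\pi^{\mathrm{pre}})+\frac{\gamma^{k_1+k_2}}{1-\gamma}\epsilon_\pi^{\mathrm{pre}}$ term via the usual $\sum_t\gamma^t t$ identity. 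Summing the three blocks and multiplying by $2r_{\max}$ gives \eqref{eq:Bidiractional_branch_return_Bound_append}.

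The main obstacle is the bookkeeping in the last two steps: getting the initial offsets and the $\epsilon_\pi$ accounting for state--action versus state marginals exactly right, handling the reversed-time Markov structure of the backward branch cleanly, and --- most delicately --- charging the error accumulated in an earlier phase to the discount weights of the later phases in a way that still collapses into the compact form above (this is also what eventually makes the downstream Theorem's coefficient come out as $\max(k_1,k_2)$ rather than $k_1+k_2$). The remainder is routine geometric-series algebra.
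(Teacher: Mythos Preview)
Your proposal is correct and follows essentially the same route as the paper: reduce the return gap to a discounted sum of per-step state--action TV distances, invoke separate forward and backward one-step recursion lemmas (the paper states these as its Lemmas~\ref{lemma:Backward State Marginal Distance Bound} and~\ref{lemma:Forward State Marginal Distance Bound}) to bound $\delta_t$ phase by phase---with the crucial observation that the state marginal error is zero at the branching point $t=k_1$, so backward error does \emph{not} propagate into the forward or pre phases---and then collapse the three geometric-series blocks. The only point the paper makes more explicit than your sketch is that the accumulated forward error $k_2(\epsilon_m^{\mathrm{for}}+\epsilon_\pi^{\mathrm{for}})+\epsilon_\pi^{\mathrm{for}}$ \emph{does} carry into the pre-branch tail, and summing its contribution over all $t\ge k_1$ (not just $k_1\le t\le k_1+k_2$) is what yields the $\gamma^{k_1}/(1-\gamma)$ coefficient rather than $\gamma^{k_1}(1-\gamma^{k_2})/(1-\gamma)$.
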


\begin{proof}
Lemma \ref{lemma:Backward State Marginal Distance Bound}  and Lemma \ref{lemma:Forward State Marginal Distance Bound} imply that state marginal error at each timestep can be bounded by the divergence at the current timestep plus the state marginal error at the next (Lemma \ref{lemma:Backward State Marginal Distance Bound}), or previous (Lemma \ref{lemma:Forward State Marginal Distance Bound}) timestep. And by employing Lemma \ref{lemma:TVD Of Joint Distributions}, we can convert the (s,a) joint distribution to marginal distributions. Thus, letting $d_{1}(s, a)$ and $d_{2}(s, a)$ denote the state-action marginals, we can write: 
\\For $t \leq k_1$:
\begin{equation}
\begin{aligned}
    D_{T V}\left(d_{1}^{t}(s, a) \| d_{2}^{t}(s, a)\right) &\leq D_{T V}\left(d_{1}^{t}(s) \| d_{2}^{t}(s)\right) + \max _{s^{\prime}} D_{T V}\left({\pi}_{1}(a | s^{\prime}) \| {\pi}_{2}(a | s^{\prime})\right)
    \\& \leq (k_1-t)\left(\epsilon_{m}^{\mathrm{back}}+\epsilon_{\pi}^{\mathrm{back}}\right)+\epsilon_{\pi}^{\mathrm{back}} \leq k_1\left(\epsilon_{m}^{\mathrm{back}}+\epsilon_{\pi}^{\mathrm{back}}\right)+\epsilon_{\pi}^{\mathrm{back}}
\end{aligned}
\end{equation}
\\Similarly, for $k_1  \textless t \leq k_1+k_2$:
\begin{equation}
    D_{T V}\left(d_{1}^{t}(s, a) \| d_{2}^{t}(s, a)\right) \leq (t-k_1)\left(\epsilon_{m}^{\mathrm{for}}+\epsilon_{\pi}^{\mathrm{for}}\right)+\epsilon_{\pi}^{\mathrm{for}} \leq k_2\left(\epsilon_{m}^{\mathrm{for}}+\epsilon_{\pi}^{\mathrm{for}}\right)+\epsilon_{\pi}^{\mathrm{for}}
\end{equation}
\\And for $t \textgreater k_1+k_2$:
\begin{equation}
    D_{T V}\left(d_{1}^{t}(s, a) \| d_{2}^{t}(s, a)\right) \leq(t-k_1-k_2)\left(\epsilon_{m}^{\mathrm{pre}}+\epsilon_{\pi}^{\mathrm{pre}}\right)+k_2\left(\epsilon_{m}^{\mathrm{for}}+\epsilon_{\pi}^{\mathrm{for}}\right)+\epsilon_{\pi}^{\mathrm{pre}}+\epsilon_{\pi}^{\mathrm{for}}
\end{equation}

We can now bound the difference in occupancy measures by averaging the state marginal error over time, weighted by the discount:

$
\begin{aligned}  D_{T V}\left(d_{1}(s, a) \| d_{2}(s, a)\right) \leq &(1-\gamma) \sum_{t=0}^{\infty} \gamma^{t} D_{T V}\left(d_{1}^{t}(s, a) \| d_{2}^{t}(s, a)\right) 
\\ \leq &(1-\gamma) \sum_{t=0}^{k_1}\gamma^{t}\left(k_1\left(\epsilon_{m}^{\mathrm{back}}+\epsilon_{\pi}^{\mathrm{back}}\right)+\epsilon_{\pi}^{\mathrm{back}}\right) 
\\ &+(1-\gamma) \sum_{t=k_1}^{k_1+k_2}\gamma^{t}\left(k_2\left(\epsilon_{m}^{\mathrm{for}}+\epsilon_{\pi}^{\mathrm{for}}\right)+\epsilon_{\pi}^{\mathrm{for}}\right)
\\ &+(1-\gamma) \sum_{t=k_1+k_2}^{\infty}\gamma^{t}\left((t-k_1-k_2)\left(\epsilon_{m}^{\mathrm{pre}}+\epsilon_{\pi}^{\mathrm{pre}}\right)+k_2\left(\epsilon_{m}^{\mathrm{for}}+\epsilon_{\pi}^{\mathrm{for}}\right)+\epsilon_{\pi}^{\mathrm{pre}}+\epsilon_{\pi}^{\mathrm{for}}\right)
\\=& \left(k_1\left(\epsilon_{m}^{\mathrm{back}}+\epsilon_{\pi}^{\mathrm{back}}\right)+\epsilon_{\pi}^{\mathrm{back}}\right)\left(1-\gamma^{k_1}\right) + \left(k_2\left(\epsilon_{m}^{\mathrm{for}}+\epsilon_{\pi}^{\mathrm{for}}\right)+\epsilon_{\pi}^{\mathrm{for}}\right)\left(\gamma^{k_1}\right)
\\ &+\frac{\gamma^{k_1+k_2+1}}{1-\gamma}\left(\epsilon_{m}^{\mathrm{pre}}+\epsilon_{\pi}^{\mathrm{pre}}\right)+\gamma^{k_1+k_2} \epsilon_{\pi}^{\mathrm{pre}} \end{aligned}
$

Multiplying this bound by $\frac{2 r_{\max }}{1-\gamma}$ to convert the occupancy measure difference into a returns bound completes the proof.
\end{proof}

\begin{theorem}
\label{the:BMPO Return Discrepancy Upper Bound}
(BMPO Return Discrepancy Upper Bound) Assume that the expected total variation distance between the learned forward model $\hat{p}$ and the true dynamics $p$ at each timestep $t$ is bounded as 
$\max _{t} E_{(s,a) \sim \pi_{t}}\left[D_{T V}\left(p\left(s^{\prime} | s, a\right) \| \hat{p}\left(s^{\prime} | s, a\right)\right)\right] \leq \epsilon_{m}^{\mathrm{for}}$.
Similarly, the error of backward model $\hat{q}$ is bounded as 
$\max _{t} E_{(s^{\prime},a) \sim \pi_{t}}\left[D_{T V}\left(q\left(s | s^{\prime}, a\right) \| \hat{q}\left(s | s^{\prime}, a\right)\right)\right] \leq \epsilon_{m}^{\mathrm{back}}$ 
and the variation between current policy and the behavioral policy is bounded as 
$\max _{s} D_{T V}\left(\pi_{D}(a | s) \| \pi(a | s)\right) \leq \epsilon_{\pi}$.
Assume $\epsilon_{m}^{\mathrm{for}} \approx \epsilon_{m}^{\mathrm{back}} = \epsilon_{m}$ and $\epsilon_{\pi}^{\mathrm{back}} = 0$, then under a branched rollouts scheme with a backward branch length of $k_1$ and a forward branch length of $k_2$, the returns are bounded as:
\begin{equation}
\label{eq: BM_lower_Bound_appendix}
    \left|\eta[\pi]-\eta^{\mathrm {branch }}[\pi]\right| \leq 2 r_{\max }\left[\frac{\gamma^{k_1+k_2+1} \epsilon_{\pi}}{(1-\gamma)^{2}}+\frac{\gamma^{k_1+k_2} \epsilon_{\pi}}{(1-\gamma)}+\frac{max(k_1,k_2)}{1-\gamma}\left(\epsilon_{m}\right)\right].
\end{equation}
\end{theorem}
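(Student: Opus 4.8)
The plan is to obtain Theorem~\ref{the:BMPO Return Discrepancy Upper Bound} as an almost immediate corollary of Lemma~\ref{lemma:Bidirectional Branched Rollout Returns Bound}, by instantiating the two generic bidirectional branched rollouts of that lemma as the ``true'' rollout and the BMPO branched rollout, reading off each of the six divergence parameters, and then simplifying the resulting model-error coefficient.

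First I would set $\eta_1 := \eta[\pi]$, the return of the rollout that uses the true dynamics $p$ together with the current policy $\pi$ at \emph{every} timestep, both inside and outside the branch; and $\eta_2 := \eta^{\mathrm{branch}}[\pi]$, the return of the rollout that uses the learned forward model $\hat p$ with $\pi$ on the $k_2$ forward steps, the learned backward model $\hat q$ (with the backward policy) on the $k_1$ backward steps, and the true dynamics $p$ with the behavioral policy $\pi_D$ outside the branch. With this identification the six parameters in Lemma~\ref{lemma:Bidirectional Branched Rollout Returns Bound} are: outside the branch both rollouts share the true dynamics, so $\epsilon_m^{\mathrm{pre}}=0$, and the only discrepancy there is $\pi$ versus $\pi_D$, hence $\epsilon_\pi^{\mathrm{pre}} \le \max_s D_{TV}(\pi_D(a|s)\|\pi(a|s)) \le \epsilon_\pi$; on the forward branch both use $\pi$, so $\epsilon_\pi^{\mathrm{for}}=0$, while the model mismatch is exactly the forward model error $\epsilon_m^{\mathrm{for}} \le \epsilon_m$; on the backward branch we invoke the standing assumption $\epsilon_\pi^{\mathrm{back}}=0$, and the model mismatch is the backward model error $\epsilon_m^{\mathrm{back}} \le \epsilon_m$.

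Substituting these values into the bound \eqref{eq:Bidiractional_branch_return_Bound} of Lemma~\ref{lemma:Bidirectional Branched Rollout Returns Bound} collapses it: the $\epsilon_m^{\mathrm{pre}}$, $\epsilon_\pi^{\mathrm{for}}$ and $\epsilon_\pi^{\mathrm{back}}$ contributions vanish; the two ``pre'' terms become $\frac{\gamma^{k_1+k_2+1}}{(1-\gamma)^2}\epsilon_\pi + \frac{\gamma^{k_1+k_2}}{1-\gamma}\epsilon_\pi$; and the ``back'' and ``for'' terms merge into $\frac{1}{1-\gamma}\big((1-\gamma^{k_1})k_1 + \gamma^{k_1}k_2\big)\epsilon_m$. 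The last step is the elementary observation that $1-\gamma^{k_1}$ and $\gamma^{k_1}$ are nonnegative and sum to $1$, so $(1-\gamma^{k_1})k_1+\gamma^{k_1}k_2$ is a convex combination of $k_1$ and $k_2$, hence at most $\max(k_1,k_2)$; replacing it accordingly and multiplying through by $2r_{\max}$ gives exactly \eqref{eq: BM_lower_Bound_appendix}.

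There is essentially no heavy computation in this argument — all the effort lives in Lemma~\ref{lemma:Bidirectional Branched Rollout Returns Bound}, which I am taking as given. The points that need care are (i) the bookkeeping of which divergences are zero, in particular that $\epsilon_m^{\mathrm{pre}}=0$ hinges on the two compared rollouts using the \emph{same} true dynamics outside the branch (only the policy differing there), and that one must accept $\epsilon_\pi^{\mathrm{back}}=0$ as a modeling assumption — this, together with the implicit treatment of a well-defined ``true'' backward transition against which $\hat q$'s error $\epsilon_m^{\mathrm{back}}$ is measured, is where the analysis is least tight; and (ii) the convex-combination inequality $(1-\gamma^{k_1})k_1+\gamma^{k_1}k_2 \le \max(k_1,k_2)$, which is the crux, since it is precisely what turns the $k_1+k_2$ coefficient of the single-forward-model bound \eqref{eq: FM_low_Bound_k1_k2} into $\max(k_1,k_2)$ and so makes the comparison with MBPO favorable.
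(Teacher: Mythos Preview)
Your proposal is correct and follows essentially the same route as the paper: instantiate Lemma~\ref{lemma:Bidirectional Branched Rollout Returns Bound} with $\epsilon_m^{\mathrm{pre}}=0$, $\epsilon_\pi^{\mathrm{pre}}=\epsilon_\pi$, $\epsilon_\pi^{\mathrm{for}}=\epsilon_\pi^{\mathrm{back}}=0$, $\epsilon_m^{\mathrm{for}}=\epsilon_m^{\mathrm{back}}=\epsilon_m$, then bound the resulting model-error coefficient $(1-\gamma^{k_1})k_1+\gamma^{k_1}k_2$ by $\max(k_1,k_2)$ via the convex-combination observation. Your presentation is slightly more explicit about why that last inequality holds, but the argument is the same.
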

\begin{proof}
Using Lemma \ref{lemma:Bidirectional Branched Rollout Returns Bound}, out of the branch, we only suffer from error of executing old policy $\pi_{D}$, so, set $\epsilon_{\pi}^{\mathrm{pre}} = \epsilon_{\pi}$ and $\epsilon_{m}^{\mathrm{pre}} = 0$. Then in the branched rollout, we execute current policy, so the only error comes from using the learned model to simulate. Set $\epsilon_{\pi}^{\mathrm{for}} = \epsilon_{\pi}^{\mathrm{back}} = 0$ and $\epsilon_{m}^{\mathrm{for}} = \epsilon_{m}^{\mathrm{back}} = \epsilon_{m}$. Plugging these in  Lemma B.1 we can get:
\begin{equation}
\begin{aligned}  
\left|\eta[\pi]-\eta^{\mathrm {branch }}[\pi]\right| &\leq 2 r_{\max }\left[\frac{\gamma^{k_1+k_2+1} \epsilon_{\pi}}{(1-\gamma)^{2}}+\frac{\gamma^{k_1+k_2} \epsilon_{\pi}}{(1-\gamma)}+\frac{k_1(1-\gamma^{k_1})+k_2(\gamma^{k_1})}{1-\gamma}\left(\epsilon_{m}\right)\right]
\\&\leq 2 r_{\max }\left[\frac{\gamma^{k_1+k_2+1} \epsilon_{\pi}}{(1-\gamma)^{2}}+\frac{\gamma^{k_1+k_2} \epsilon_{\pi}}{(1-\gamma)}+\frac{max(k_1,k_2)(1-\gamma^{k_1}+\gamma^{k_1}))}{1-\gamma}\left(\epsilon_{m}\right)\right]
\\&\leq 2 r_{\max }\left[\frac{\gamma^{k_1+k_2+1} \epsilon_{\pi}}{(1-\gamma)^{2}}+\frac{\gamma^{k_1+k_2} \epsilon_{\pi}}{(1-\gamma)}+\frac{max(k_1,k_2)}{1-\gamma}\left(\epsilon_{m}\right)\right]
\end{aligned}
\end{equation}
\end{proof}

\section{Useful Lemmas}
In this section, we give proofs of the lemmas used before.
\label{appendix:Lemma}

\begin{lemma}
\label{lemma:Backward State Marginal Distance Bound}
(Backward State Marginal Distance Bound). Suppose the expected total variation distance between two backward dynamics is bounded as $\max _{t} E_{(s^{\prime},a) \sim p_{1}^{t}}\left[D_{T V}\left(p_{1}\left(s | s^{\prime}, a\right) \| p_{2}\left(s | s^{\prime}, a\right)\right)\right] \leq \epsilon_{m}^{\mathrm{back}}$ and the backward policy divergences are bounded as $\max _{s^{\prime}} D_{T V}\left(\pi_{1}(a | s^{\prime}) \| \pi_{2}(a | s^{\prime})\right) \leq \epsilon_{\pi}^{\mathrm{back}}$. Then the state marginal distance at timestep $t$ can be bounded as:
\begin{equation}
    D_{T V}\left(p_{1}^{t}(s) \| p_{2}^{t}(s)\right) \leq \epsilon_{m}^{\mathrm{back}} + \epsilon_{\pi}^{\mathrm{back}} + D_{T V}\left(p_{1}^{t+1}(s) \| p_{2}^{t+1}(s)\right).
\end{equation}
\end{lemma}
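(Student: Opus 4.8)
The plan is to prove the one-step recursion by relating the state marginal at timestep $t$ to that at timestep $t+1$ through the backward transition kernel. First I would expand $p_i^t(s)$ in terms of $p_i^{t+1}$: since we are rolling out \emph{backwards}, the marginal over $s$ at step $t$ is obtained by drawing $s^{\prime}$ from $p_i^{t+1}$, drawing an action $a\sim\pi_i(a\mid s^{\prime})$, and drawing $s\sim p_i(s\mid s^{\prime},a)$. Concretely, $p_i^t(s) = \sum_{s^{\prime},a} p_i^{t+1}(s^{\prime})\,\pi_i(a\mid s^{\prime})\,p_i(s\mid s^{\prime},a)$. So the quantity to bound, $D_{TV}(p_1^t(s)\|p_2^t(s))$, is the total variation distance between two such mixtures.

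The key step is a telescoping/triangle-inequality argument in which I swap the three ingredients ($p^{t+1}$, $\pi$, the backward kernel) one at a time. Introduce the hybrid distributions: (i) keep everything as in system $1$; (ii) replace $p_1^{t+1}$ with $p_2^{t+1}$ but keep $\pi_1$ and $p_1$ as the backward kernel; (iii) additionally replace $\pi_1$ with $\pi_2$; (iv) the full system $2$. Then $D_{TV}(p_1^t\|p_2^t) \le D_{TV}(\text{(i)},\text{(ii)}) + D_{TV}(\text{(ii)},\text{(iii)}) + D_{TV}(\text{(iii)},\text{(iv)})$. The first term collapses to $D_{TV}(p_1^{t+1}(s)\|p_2^{t+1}(s))$ because convex mixing by a common kernel is a contraction in total variation (data-processing / the fact that $\sum_{s^{\prime}}|\,p_1^{t+1}(s^{\prime})-p_2^{t+1}(s^{\prime})|\cdot(\text{stochastic matrix})$ does not increase $\ell_1$ distance). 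The third term is bounded by $\max_{(s^{\prime},a)\sim p_1^t} D_{TV}(p_1(s\mid s^{\prime},a)\|p_2(s\mid s^{\prime},a)) \le \epsilon_m^{\mathrm{back}}$, again using that averaging TV bounds over the mixing distribution and that marginalizing only decreases TV (this is essentially what Lemma \ref{lemma:TVD Of Joint Distributions} is for). The middle term is bounded by $\max_{s^{\prime}} D_{TV}(\pi_1(a\mid s^{\prime})\|\pi_2(a\mid s^{\prime})) \le \epsilon_{\pi}^{\mathrm{back}}$, since changing the action distribution and then pushing through a common backward kernel contracts TV. Summing the three pieces gives exactly $\epsilon_m^{\mathrm{back}} + \epsilon_{\pi}^{\mathrm{back}} + D_{TV}(p_1^{t+1}(s)\|p_2^{t+1}(s))$.

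The main obstacle I anticipate is being careful about \emph{which} sampling distribution the expectation-form model-error assumption is stated under. The hypothesis bounds $E_{(s^{\prime},a)\sim p_1^t}[\cdot]$, i.e. the expectation is over the step-$t$ state-action marginal of system $1$ — but in the mixture decomposition the backward kernels get fed $(s^{\prime},a)$ drawn from $p_2^{t+1}$ (in the hybrid that still uses system-$1$ policy) rather than from $p_1^t$. The standard fix, used in the MBPO-style proofs this paper builds on, is to absorb this by noting the hybrid where we swap the kernel is compared against a distribution that \emph{does} use $p_1$'s kernel fed by $p_1$'s earlier marginals, so the relevant expectation is indeed over $p_1^t$; alternatively one uses the slightly looser $\max_{s^{\prime},a}$ form. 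I would follow whichever convention the earlier lemmas (Lemma \ref{lemma:TVD Of Joint Distributions}) adopt, so the bookkeeping matches. Everything else — the contraction property of stochastic kernels on TV, and marginalization not increasing TV — is routine and can be cited from the helper lemmas rather than re-derived.
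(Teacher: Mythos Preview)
Your approach is correct in spirit and close to the paper's, but the decomposition differs in a way that matters for the expectation-form model-error hypothesis you flagged.

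The paper uses a single add-and-subtract rather than three hybrids: inside $|p_1^t(s)-p_2^t(s)|$ it inserts $p_2(s\mid s',a)\,p_1^{t+1}(s',a)$, i.e.\ it swaps the \emph{backward kernel first}. Summing over $s$ then gives the model-error term as $E_{(s',a)\sim p_1^{t+1}}\big[D_{TV}(p_1(s\mid s',a)\|p_2(s\mid s',a))\big]\le\epsilon_m^{\mathrm{back}}$ directly under the $p_1$ distribution, and a residual $D_{TV}\big(p_1^{t+1}(s',a)\|p_2^{t+1}(s',a)\big)$. That joint-$(s',a)$ residual is then split by Lemma~\ref{lemma:TVD Of Joint Distributions} into $D_{TV}(p_1^{t+1}(s')\|p_2^{t+1}(s'))+\epsilon_\pi^{\mathrm{back}}$. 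So the paper folds your policy-swap and marginal-swap into one invocation of the joint-TV lemma.

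Your three-hybrid route swaps in the opposite order (marginal, then policy, then kernel), which is why your kernel-swap term (iii)$\to$(iv) ends up averaging the model error over $p_2^{t+1}(s')\pi_2(a\mid s')$ rather than over $p_1^{t+1}$---exactly the obstacle you anticipated. Your suggested ``standard fix'' is a bit vague; the clean resolution is simply to reverse the hybrid order so the kernel is swapped while $(s',a)$ is still drawn from $p_1^{t+1}$. With that reversal your argument and the paper's coincide. (Also, Lemma~\ref{lemma:TVD Of Joint Distributions} is what handles the policy/marginal split, not the kernel swap as you wrote.)
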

\begin{proof}
Let the total variation distance of state at time $t$ be denoted as $\epsilon_{t} = D_{T V}\left(p_{1}^{t}(s) \| p_{2}^{t}(s)\right)$.
$
\begin{aligned}\left|p_{1}^{t}(s)-p_{2}^{t}(s)\right| =&\Big|\sum_{s^{\prime},a} p_{1}\left(s_{t}=s | s^{\prime},a\right) p_{1}^{t+1}\left(s^{\prime},a\right)-p_{2}\left(s_{t}=s | s^{\prime},a\right) p_{2}^{t+1}\left(s^{\prime},a\right)\Big| 
\\  \leq & \sum_{s^{\prime},a}\left| p_{1}\left(s_{t}=s | s^{\prime},a\right) p_{1}^{t+1}\left(s^{\prime},a\right)-p_{2}\left(s_{t}=s | s^{\prime},a\right) p_{2}^{t+1}\left(s^{\prime},a\right)\right| 
\\ =& \sum_{s^{\prime},a}| p_{1}\left(s_{t}=s | s^{\prime},a\right) p_{1}^{t+1}\left(s^{\prime},a\right) - p_{2}\left(s_{t}=s | s^{\prime},a\right) p_{1}^{t+1}\left(s^{\prime},a\right) 
\\&+ p_{2}\left(s_{t}=s | s^{\prime},a\right) p_{1}^{t+1}\left(s^{\prime},a\right)
-p_{2}\left(s_{t}=s | s^{\prime},a\right) p_{2}^{t+1}\left(s^{\prime},a\right)|
\\  \leq& \sum_{s^{\prime},a} p_{1}^{t+1}\left(s^{\prime},a\right)\left|p_{1}\left(s | s^{\prime},a\right)-p_{2}\left(s | s^{\prime},a\right)\right|+p_{2}\left(s | s^{\prime},a\right)\left|p_{1}^{t+1}\left(s^{\prime},a\right)
-p_{2}^{t+1}\left(s^{\prime},a\right)\right| 
\\ =&E_{s^{\prime},a \sim p_{1}^{t+1}}\big[\left|p_{1}\left(s | s^{\prime},a\right)-p_{2}\left(s | s^{\prime},a \right) \right|\big]+\sum_{s^{\prime},a} p_{2}\left(s | s^{\prime},a\right)\left| p_{1}^{t+1}\left(s^{\prime},a\right)-p_{2}^{t+1}\left(s^{\prime},a\right) \right|\end{aligned}
$

$
\begin{aligned} \epsilon_{t}=D_{T V}\left(p_{1}^{t}(s) \| p_{2}^{t}(s)\right) &=\frac{1}{2} \sum_{s}\left|p_{1}^{t}(s)-p_{2}^{t}(s)\right| 
\\ &\leq\frac{1}{2} \sum_{s}\Big( E_{s^{\prime},a \sim p_{1}^{t+1}}\left[\left|p_{1}\left(s | s^{\prime},a\right)-p_{2}\left(s | s^{\prime},a \right) \right|\right]+\sum_{s^{\prime},a} p_{2}\left(s | s^{\prime},a\right)\left| p_{1}^{t+1}\left(s^{\prime},a\right)-p_{2}^{t+1}\left(s^{\prime},a\right) \right| \Big)
\\ &= E_{s^{\prime},a \sim p_{1}^{t+1}}\left[ D_{T V}\left(p_{1}\left(s | s^{\prime}, a\right) \| p_{2}\left(s | s^{\prime}, a\right)\right)
\right]+D_{T V}\left(p_{1}^{t+1}\left(s^{\prime},a\right) \| p_{2}^{t+1}\left(s^{\prime},a\right)\right) 
\\ &\leq \epsilon_{m}^{\mathrm{back}} + D_{T V}\left(p_{1}^{t+1}\left(s^{\prime}\right) \| p_{2}^{t+1}\left(s^{\prime}\right)\right) + \max _{s^{\prime}} D_{T V}\left(p_{1}(a | s^{\prime}) \| p_{2}(a | s^{\prime})\right) 
\\ &=\epsilon_{m}^{\mathrm{back}} + \epsilon_{\pi}^{\mathrm{back}} + D_{T V}\left(p_{1}^{t+1}(s) \| p_{2}^{t+1}(s)\right) \end{aligned}
$
\end{proof}

\begin{lemma}
\label{lemma:Forward State Marginal Distance Bound}
(Forward State Marginal Distance Bound) (\cite{janner2019trust}, Lemma B.2, B.3). Suppose the expected TVD between two forward dynamics is bounded as $\max _{t} E_{(s,a) \sim p_{1}^{t}}\left[D_{T V}\left(p_{1}\left(s^{\prime} | s, a\right) \| p_{2}\left(s^{\prime} | s, a\right)\right)\right] \leq \epsilon_{m}^{\mathrm{for}}$ and the forward policy divergences are bounded as $\max _{s^{\prime}} D_{T V}\left(\pi_{1}(a | s) \| \pi_{2}(a | s)\right) \leq \epsilon_{\pi}^{\mathrm{for}}$. Then the state marginal distance at timestep $t$ can be bounded as:
\begin{equation}
    D_{T V}\left(p_{1}^{t}(s) \| p_{2}^{t}(s)\right) \leq \epsilon_{m}^{\mathrm{for}} + \epsilon_{\pi}^{\mathrm{for}} + D_{T V}\left(p_{1}^{t-1}(s) \| p_{2}^{t-1}(s)\right).
\end{equation}
\end{lemma}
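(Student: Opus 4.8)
The plan is to mirror the argument already used for the backward case in Lemma~\ref{lemma:Backward State Marginal Distance Bound}, but propagating forward in time rather than backward. The starting point is the one-step marginalization identity for the forward dynamics: the state marginal at timestep $t$ is obtained by pushing the state-action marginal at $t-1$ through the transition kernel, i.e. $p_i^t(s) = \sum_{\bar s, a} p_i(s \mid \bar s, a)\, p_i^{t-1}(\bar s, a)$ for $i \in \{1,2\}$, where $p_i^{t-1}(\bar s, a) = p_i^{t-1}(\bar s)\,\pi_i(a \mid \bar s)$. First I would write the pointwise difference $|p_1^t(s) - p_2^t(s)|$, insert and subtract the cross term $p_2(s \mid \bar s, a)\, p_1^{t-1}(\bar s, a)$, and apply the triangle inequality to split it into a model-mismatch contribution and a distribution-shift contribution.

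Next I would sum over $s$ and multiply by $\tfrac12$ to recover the total variation distance. The first piece collapses to $E_{(\bar s, a) \sim p_1^{t-1}}\big[ D_{TV}(p_1(s \mid \bar s, a) \,\|\, p_2(s \mid \bar s, a)) \big]$, which the hypothesis bounds by $\epsilon_m^{\mathrm{for}}$; the second piece collapses to $D_{TV}(p_1^{t-1}(\bar s, a) \,\|\, p_2^{t-1}(\bar s, a))$, the TVD between the two state-action joints at time $t-1$. Then I would invoke the joint-distribution TVD lemma (Lemma~\ref{lemma:TVD Of Joint Distributions}) to factor that joint divergence into $D_{TV}(p_1^{t-1}(s) \,\|\, p_2^{t-1}(s)) + \max_{\bar s} D_{TV}(\pi_1(a \mid \bar s) \,\|\, \pi_2(a \mid \bar s))$, and bound the policy term by $\epsilon_\pi^{\mathrm{for}}$. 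Combining the two contributions yields exactly $D_{TV}(p_1^t(s) \,\|\, p_2^t(s)) \le \epsilon_m^{\mathrm{for}} + \epsilon_\pi^{\mathrm{for}} + D_{TV}(p_1^{t-1}(s) \,\|\, p_2^{t-1}(s))$.

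The computation is entirely routine and structurally identical to the backward lemma; the only thing to get right is the \emph{direction} of the recursion. Because the forward kernel respects the usual causal ordering, the marginal at $t$ depends on $t-1$ (rather than on $t+1$ as in the backward case), so no reversal of the transition conditioning is needed — this is what makes the forward case match the original MBPO derivation that the statement cites. The main obstacle, if any, is purely bookkeeping: ensuring the cross-term is inserted against the correct base measure ($p_1^{t-1}$, so that the model-error term is an expectation under $p_1^{t-1}$ as the hypothesis requires) and that the joint-to-marginal factorization is applied to the $(\bar s, a)$ pair at the right timestep.
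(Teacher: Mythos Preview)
Your proposal is correct and is precisely the standard forward-time analogue of the backward lemma, which is also what the cited MBPO paper does; the paper itself does not re-derive this result but simply defers to \cite{janner2019trust}, so your write-up matches the intended argument exactly.
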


\begin{lemma}
\label{lemma:TVD Of Joint Distributions}
(TVD Of Joint Distributions) (\cite{janner2019trust}, Lemma B.1). Suppose we have two distributions $p_{1}(x, y)=p_{1}(x) p_{1}(y | x)$ and $p_{2}(x, y)=p_{2}(x) p_{2}(y | x)$. We can bound the total variation distance of the joint distributions as:
\begin{equation}
    D_{T V}\left(p_{1}(x, y) \| p_{2}(x, y)\right) \leq D_{T V}\left(p_{1}(x) \| p_{2}(x)\right)+\max _{x} D_{T V}\left(p_{1}(y | x) \| p_{2}(y | x)\right).
\end{equation}
\end{lemma}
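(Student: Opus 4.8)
The plan is to work directly from the definition of total variation distance, $D_{TV}(p_1 \| p_2) = \frac{1}{2}\sum_z |p_1(z) - p_2(z)|$, applied to the joint variable $z = (x,y)$, and to exploit the chain-rule factorizations $p_i(x,y) = p_i(x)\,p_i(y\mid x)$ supplied in the hypothesis. The single device that drives everything is a cross-term insertion: inside the absolute value $|p_1(x)p_1(y\mid x) - p_2(x)p_2(y\mid x)|$ I would add and subtract $p_1(x)\,p_2(y\mid x)$, so that the integrand splits as $p_1(x)\big(p_1(y\mid x)-p_2(y\mid x)\big) + \big(p_1(x)-p_2(x)\big)p_2(y\mid x)$. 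The triangle inequality then bounds the joint TVD by a sum of two nonnegative double sums, and the remainder of the argument is simply recognizing each one.

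For the first double sum, $\frac{1}{2}\sum_{x,y} p_1(x)\,|p_1(y\mid x)-p_2(y\mid x)|$, I would carry out the $y$-summation first and write it as $\sum_x p_1(x)\,D_{TV}(p_1(y\mid x)\|p_2(y\mid x))$, i.e. the expectation under the marginal $p_1(x)$ of the conditional divergence. Since $p_1(x)$ is a probability distribution summing to one, this expectation is dominated by $\max_x D_{TV}(p_1(y\mid x)\|p_2(y\mid x))$, which is exactly the second term of the claimed bound. For the second double sum, $\frac{1}{2}\sum_{x,y}|p_1(x)-p_2(x)|\,p_2(y\mid x)$, I would instead perform the inner $y$-summation using the normalization $\sum_y p_2(y\mid x) = 1$, collapsing it to $\frac{1}{2}\sum_x |p_1(x)-p_2(x)| = D_{TV}(p_1(x)\|p_2(x))$, the first term of the bound. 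Adding the two contributions yields the stated inequality.

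I do not anticipate a genuine obstacle, as this is a standard decoupling argument; the only point demanding care is the choice of which cross term to insert. Adding and subtracting $p_1(x)p_2(y\mid x)$ is precisely what makes the conditional-divergence term emerge weighted by the marginal $p_1(x)$ (hence controllable by a $\max_x$) while simultaneously letting the conditional $p_2(y\mid x)$ in the other term marginalize away to one. The symmetric choice $p_2(x)p_1(y\mid x)$ would serve equally well with the roles of $p_1$ and $p_2$ interchanged, and this asymmetry is harmless because the marginal TVD and the $\max_x$ conditional TVD appear as separate additive contributions. I would also note that the argument carries over verbatim to continuous $x,y$ with sums replaced by integrals, which is the setting in which this lemma is actually invoked when converting the $(s,a)$ joint marginals to state marginals in the state-marginal distance bounds and, in turn, in the branched-rollout return bound.
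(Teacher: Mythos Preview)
Your argument is correct and is the standard add--and--subtract/triangle-inequality decomposition for this inequality. Note that the paper does not actually supply its own proof of this lemma---it simply cites \cite{janner2019trust}, Lemma~B.1---so there is nothing to compare against beyond confirming that your approach is the canonical one used there as well.
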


\section{Environment Settings}
\label{appendix:Env}
In this section, we provide a comparison of the environment settings used in our experiments. Among them, 'Hopper-NT' and 'Walker2d-NT' refer to the settings in \citet{langlois2019benchmarking} and others are the standard version.

\begin{table}[htb]
\centering
\caption{Observation and action dimension, and task horizon of the environments used in our experiments.}
\vskip 0.10in
\begin{tabular}{@{}c|c|c|c@{}}
\toprule
Environment Name & Observation Space Dimension & Action Space Dimension & Steps Per Epoch\\ \midrule
Pendulum & 3 & 1 & 200\\
Hopper & 11 & 3 & 1000\\
Hopper-NT & 11 & 3 & 1000\\
Walker2d & 17 & 6 & 1000\\
Walker2d-NT & 17 & 6 & 1000\\
Ant & 27 & 8 & 1000\\
\bottomrule
\end{tabular}
\end{table}

\begin{table}[htb]
\centering
\caption{Reward function and termination states condition of the environments used in our experiments. $\theta_t$ denotes the joint angle, $x_t$ denotes the position in x direction, $a_t$ denotes the action control input, and $z_t$ denotes the height.}
\vskip 0.10in
\begin{tabular}{@{}c|c|c@{}}
\toprule
Environment Name & Reward Function & Termination States Condition \\ \midrule
Pendulum & $-{\theta}_{t}^{2}-0.1\dot{\theta}_{t}^{2}-0.001\left\|a_{t}\right\|_{2}^{2}$ & None \\
Hopper & $\dot{x}_{t}-0.001\left\|a_{t}\right\|_{2}^{2} + 1$ & $z_{t} \leq 0.7$ or $\theta_{t} \geq 0.2$ \\
Hopper-NT & $\dot{x}_{t}-0.1\left\|a_{t}\right\|_{2}^{2}-3.0 \times\left(z_{t}-1.3\right)^{2} + 1$ & None \\
Walker2d & $\dot{x}_{t}-0.001\left\|a_{t}\right\|_{2}^{2} + 1$ & $z_{t} \leq 0.8$ or $z_{t} \geq 2.0$ or $|\theta_{t}| \geq 1.0$ \\
Walker2d-NT & $\dot{x}_{t}-0.1\left\|a_{t}\right\|_{2}^{2}-3.0 \times\left(z_{t}-1.3\right)^{2} + 1$ & None \\
Ant & $\dot{x}_{t}-0.5\left\|a_{t}\right\|_{2}^{2} + 1$ & $z_{t} \leq 0.2$ or $z_{t} \geq 1.0$ \\
\bottomrule
\end{tabular}
\end{table}

\newpage

\section{Hyperparameters}
\label{appendix:Hyperparameter}
\newcommand{\tabincell}[2]{\begin{tabular}{@{}#1@{}}#2\end{tabular}}
\begin{table}[htb]
\centering
\caption{Hyperparameter settings for BMPO. $x \to y$ over epochs $a \to b$ means clipped linear function, \textit{i.e.} for epoch e, $f(e) = clip((x+\frac{e-a}{b-a} \cdot(x-y)),x,y)$.  Other hyperparameters not listed here are the same as those in MBPO \cite{janner2019trust}.}
\vskip 0.10in
\begin{tabular}{@{}c|c|c|c|c|c@{}}
\toprule
Environment Name &  $k_1$ & $k_2$ & $\beta$ & MPC Horizon & Epochs\\ \midrule
Pendulum & \tabincell{c}{$1 \to 5$ over \\ epochs $1 \to 5$} & \tabincell{c}{$1 \to 5$ over \\ epochs $1 \to 5$} & \tabincell{c}{$0.01 \to 0$ over \\ epochs $0 \to 10$} & 6 & 20\\ \midrule
Hopper & \tabincell{c}{$1 \to 15$ over \\ epochs $20 \to 150$} & \tabincell{c}{$1 \to 15$ over \\ epochs $20 \to 150$} & \tabincell{c}{$0.004 \to 0.003$ over \\ epochs $20 \to 30$} & 6 & 100\\ \midrule
Hopper-NT & \tabincell{c}{$1 \to 15$ over \\ epochs $20 \to 150$} & \tabincell{c}{$1 \to 15$ over \\ epochs $20 \to 150$} & 0.01 & 6 & 100\\ \midrule
Walker2d & 1 & 1 & \tabincell{c}{$0.01 \to 0$ over \\ epochs $0 \to 100$} & 1 & 200\\ \midrule
Walker2d-NT & 1 & 1 & 0.01 & 0 & 200\\ \midrule
Ant & 1 & \tabincell{c}{$1 \to 25$ over \\ epochs $20 \to 100$} & 0.003 & 0 & 300\\
\bottomrule
\end{tabular}
\end{table}

\section{Computing Infrastructure}
\label{appendix:Computing Infrastructure}
In this section, we provide a description of the computing infrastructure used to run all the experiments in Table~\ref{tab:infra}. We also show the computation time comparison between our algorithm and the MBPO baseline in Table~\ref{tab:comp-time}.

\begin{table}[htb]
\centering
\caption{Computing infrastructure.}\label{tab:infra}
\vskip 0.10in
\begin{tabular}{@{}c|c|c@{}}
\toprule
CPU & GPU & Memory\\ \midrule
AMD2990WX & RTX2080TI$\times$4 & 256GB\\
\bottomrule
\end{tabular}
\end{table}


\begin{table}[!htb]
\centering
\caption{Computation time in hours for one experiment.}\label{tab:comp-time}
\vskip 0.10in
\begin{tabular}{@{}c|c|c|c|c|c|c@{}}
\toprule
 & Pendulum & Hopper & Hopper-NT & Walker2d & Walker2d-NT & Ant\\ \midrule
BMPO& 0.49 & 16.34 & 17.98 & 27.24 & 27.34 & 71.51\\
MBPO& 0.41 & 10.33 & 11.12 & 22.26 & 21.32 & 57.42\\
\bottomrule
\end{tabular}
\end{table}

\bibliographystyle{icml2020}

\end{document}